\documentclass[sigconf]{acmart}

\usepackage{subfigure}
\usepackage{multirow}
\usepackage{pifont}
\usepackage{bbding}
\usepackage{adjustbox}
\usepackage{threeparttable}
\usepackage{colortbl}

\usepackage{amsthm}

\newtheorem{theorem}{Theorem}

\definecolor{COLOR_MEAN}{HTML}{f0f0f0}

\usepackage{float}
\usepackage{caption}
\usepackage{tcolorbox}
\tcbuselibrary{skins, breakable}

\newfloat{prompt}{htbp}{lop}
\floatname{prompt}{Prompt}

\newcommand{\vpara}[1]{\vspace{0.05in}\noindent\textbf{#1 }}

\AtBeginDocument{%
  }

\copyrightyear{2026}
\acmYear{2026}
\setcopyright{cc}
\setcctype{by-nc-nd}
\acmConference[KDD '26]{Proceedings of the 32nd ACM SIGKDD Conference on Knowledge Discovery and Data Mining V.1}{August 09--13, 2026}{Jeju Island, Republic of Korea}
\acmBooktitle{Proceedings of the 32nd ACM SIGKDD Conference on Knowledge Discovery and Data Mining V.1 (KDD '26), August 09--13, 2026, Jeju Island, Republic of Korea}
\acmPrice{}
\acmDOI{10.1145/3770854.3780260}
\acmISBN{979-8-4007-2258-5/2026/08}

\begin{document}

\title{Learning Hierarchical Knowledge in Text-Rich Networks with Taxonomy-Informed Representation Learning}

\author{Yunhui Liu}
\orcid{0009-0006-3337-0886}
\affiliation{
\institution{State Key Laboratory for Novel Software Technology\\ Nanjing University}
\city{Nanjing}
\country{China}}
\affiliation{
\institution{Ant Group}
\city{Hangzhou}
\country{China}}
\email{lyhcloudy1225@gmail.com}

\author{Yongchao Liu}
\authornote{Corresponding authors.}
\orcid{0000-0003-3440-9675}
\affiliation{
\institution{Ant Group}
\city{Hangzhou}
\country{China}}
\email{yongchao.ly@antgroup.com}

\author{Yinfeng Chen}
\orcid{0009-0005-6873-8786}
\affiliation{
\institution{State Key Laboratory for Novel Software Technology\\ Nanjing University}
\city{Nanjing}
\country{China}}
\email{yinfengchen@smail.nju.edu.cn}

\author{Chuntao Hong}
\orcid{0009-0009-3472-6102}
\affiliation{
\institution{Ant Group}
\city{Beijing}
\country{China}}
\email{chuntao.hct@antgroup.com}

\author{Tao Zheng}
\orcid{0009-0001-3736-4604}
\affiliation{
\institution{State Key Laboratory for Novel Software Technology\\ Nanjing University}
\city{Nanjing}
\country{China}}
\email{zt@nju.edu.cn}

\author{Tieke He}
\authornotemark[1]
\orcid{0000-0001-9649-1796}
\affiliation{
\institution{State Key Laboratory for Novel Software Technology\\ Nanjing University}
\city{Nanjing}
\country{China}}
\email{hetieke@gmail.com}

\renewcommand{\shortauthors}{Yunhui Liu et al.}

\begin{abstract}
Hierarchical knowledge structures are ubiquitous across real-world domains and play a vital role in organizing information from coarse to fine semantic levels. 
While such structures have been widely used in taxonomy systems, biomedical ontologies, and retrieval-augmented generation, their potential remains underexplored in the context of Text-Rich Networks (TRNs), where each node contains rich textual content and edges encode semantic relationships. 
Existing methods for learning on TRNs often focus on flat semantic modeling, overlooking the inherent hierarchical semantics embedded in textual documents.
To this end, we propose TIER (Hierarchical \textbf{T}axonomy-\textbf{I}nformed R\textbf{E}presentation Learning on Text-\textbf{R}ich Networks), which first constructs an implicit hierarchical taxonomy and then integrates it into the learned node representations. 
Specifically, TIER employs similarity-guided contrastive learning to build a clustering-friendly embedding space, upon which it performs hierarchical K-Means followed by LLM-powered clustering refinement to enable semantically coherent taxonomy construction. 
Leveraging the resulting taxonomy, TIER introduces a cophenetic correlation coefficient-based regularization loss to align the learned embeddings with the hierarchical structure.
By learning representations that respect both fine-grained and coarse-grained semantics, TIER enables more interpretable and structured modeling of real-world TRNs. We demonstrate that our approach significantly outperforms existing methods on multiple datasets across diverse domains, highlighting the importance of hierarchical knowledge learning for TRNs.
Our code is open-sourced at \url{https://github.com/Cloudy1225/TIER}.
\end{abstract}

\begin{CCSXML}
<ccs2012>
   <concept>
       <concept_id>10010147.10010257</concept_id>
       <concept_desc>Computing methodologies~Machine learning</concept_desc>
       <concept_significance>500</concept_significance>
       </concept>
 </ccs2012>
\end{CCSXML}

\ccsdesc[500]{Computing methodologies~Machine learning}

\keywords{Hierarchical Taxonomy, Graph Clustering, Text-Rich Network, Large Language Models}


\maketitle

\section{Introduction}

Hierarchical knowledge plays a fundamental role across diverse domains, enabling systems to represent, reason about, and retrieve information at multiple levels of abstraction. 
From taxonomy-guided document classification~\cite{HQC, TELEClass} and biomedical ontologies~\cite{BioOnto, BioOnto1} to modern retrieval-augmented generation systems~\cite{GraphRAGSurvey, HiRAG} and LLM-based question-answering pipelines~\cite{HamQA, OBQC}, hierarchies serve as powerful inductive priors that provide a coarse-to-fine lens through which complex information can be structured, interpreted, and generalized, ultimately improving both performance and explainability of learning systems.

Despite its broad utility, hierarchical modeling remains underexplored in the context of Text-Rich Networks (TRNs), where each node is associated with rich textual content and connected through semantically meaningful edges~\cite{LLM4Graph}. 
For example, academic papers connected via citations often discuss related topics, and e-commerce products linked by co-purchase behaviors usually serve similar functions. 
Together, TRNs leverage the dual strengths of textual semantics and graph structure, and have therefore been widely adopted in numerous applications, including recommender systems~\cite{LLMRec}, anomalous user detection~\cite{LGB}, social media analysis~\cite{TwHIN-BERT}, and scientific document understanding~\cite{SciLLM}. 
Effective TRN representation learning requires integrating both the textual content and the graph structure~\cite{CS-TAG, HTAG}. 
While early approaches used shallow text features and focused on Graph Neural Network (GNN) design~\cite{GCN, SAGE, GAT}, recent models have leveraged pre-trained language models (PLMs)~\cite{BERT, RoBERTa, SentenceBERT} and more recently LLMs~\cite{Mistral7B, GPT4, DeepSeekV3} to incorporate richer semantic information. 
These hybrid models have demonstrated strong performance improvements through strategies such as variational inference~\cite{GLEM}, masked prediction~\cite{PATTON}, contrastive learning~\cite{GRENADE}, or direct LLM inference~\cite{OFA, GraphWiz}. 

\begin{figure}[h]
\centerline{\includegraphics[width=1.\linewidth]{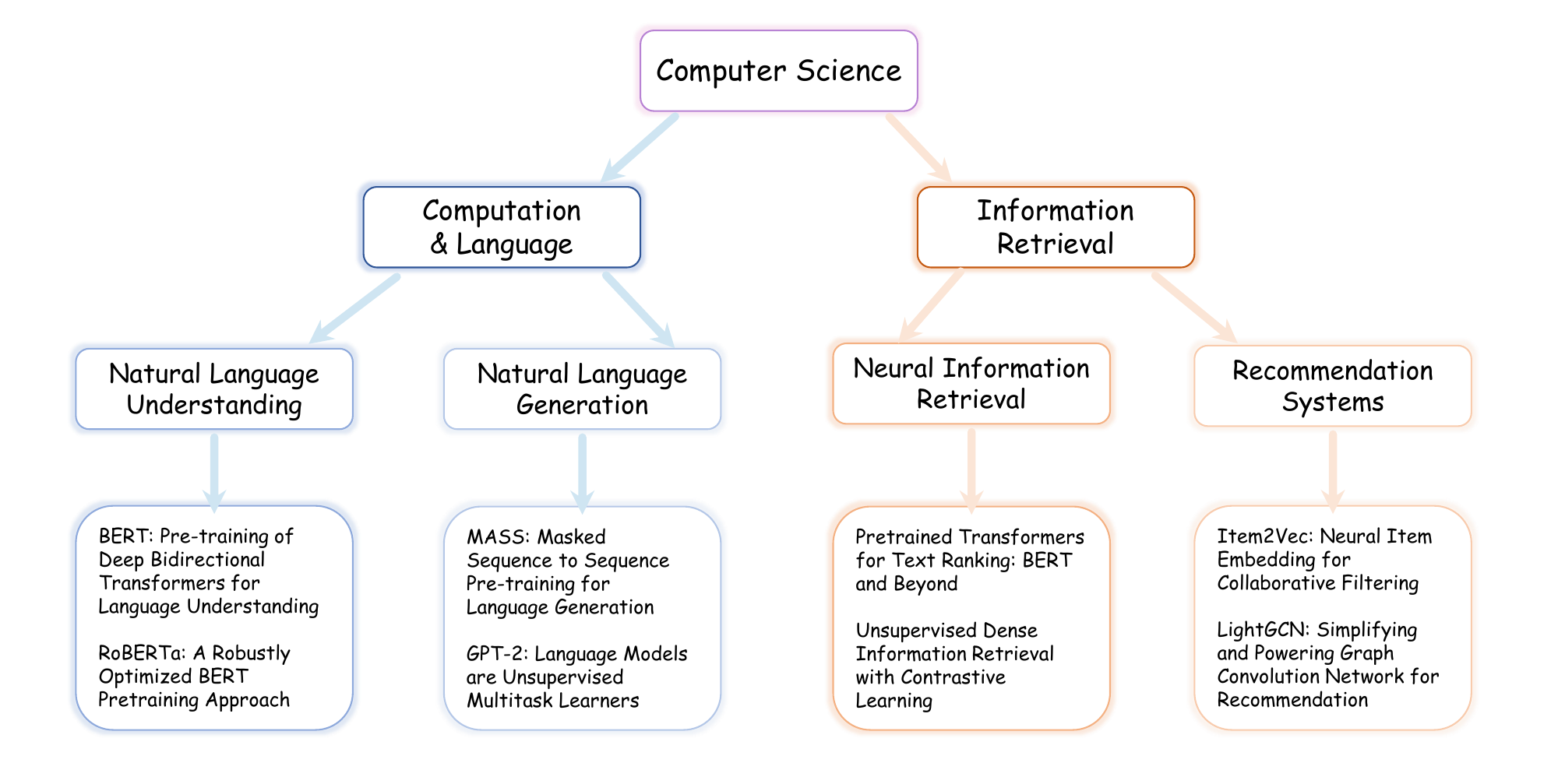}}\caption{An example taxonomy of computer science papers.} \label{fig:taxonomy_example}
\end{figure}

However, most existing TRN learning methods concentrate on preserving flat semantic structures during their modeling process, but rarely consider and leverage the hierarchical semantic structures that exist ubiquitously in real-world documents. 
In many real-world domains, the associated textual content follows a tree-structured hierarchical taxonomy (e.g., 
CCS\footnote{ACM Computing Classification System: \url{https://dl.acm.org/ccs}}, 
MeSH\footnote{Medical Subject Headings: \url{https://www.nlm.nih.gov/mesh/meshhome.html}}, 
APC\footnote{Amazon Product Categories: \url{https://www.ecomengine.com/amazon-categories}}, 
and 
IPC\footnote{International Patent Classification: \url{https://www.wipo.int/en/web/classification-ipc}}
) that encapsulates coarse-to-fine semantic information~\cite{TaxoSurvey, TaxoGen, NetTaxo}. 
As illustrated in Figure~\ref{fig:taxonomy_example}, a group of papers related to "Computation \& Language" and "Information Retrieval" can be further decomposed into finer-grained categories like "Natural Language Understanding", "Natural Language Generation", "Neural Information Retrieval", and "Recommender Systems", or conversely abstracted into coarser-grained categories like "Computer Science". 
An ideal representation space is expected to reflect this inherent hierarchical semantic structure. 
In such a space, a paper from "Natural Language Understanding" should be most similar to other papers in the same category, followed by papers in "Natural Language Generation" from the same coarse-grained category, and finally, less similar papers from other coarse-grained categories, like "Information Retrieval".
This hierarchical structure allows the model to distinguish between documents based on their semantic closeness not just at a surface level, but across different levels of abstraction.
By explicitly incorporating hierarchical semantic structures, node representations can more accurately reflect the intrinsic relationships between documents, thus improving the model's ability to classify and cluster documents effectively. 
However, effectively modeling this hierarchy in TRN learning has not been thoroughly explored. Addressing this gap requires careful consideration of how to model and integrate hierarchical semantic structures in TRN learning models. 
Thus, the core challenges addressed in this paper are: (1) How to effectively model the hierarchical structure of semantics inherent in TRNs when explicit hierarchies are absent or incomplete? (2) How to seamlessly integrate this hierarchical structure into TRN learning models, ensuring that the learned node representations reflect both fine-grained and coarse-grained semantic relations?

To address these challenges, we propose TIER (Hierarchical \textbf{T}axonomy-\textbf{I}nformed R\textbf{E}presentation Learning on Text-\textbf{R}ich Networks), which explicitly models and leverages hierarchical semantics in TRNs. 
TIER operates in two key stages. 
First, it automatically constructs a high-quality, semantically coherent taxonomy that captures the coarse-to-fine structure of node semantics. 
This is achieved by embedding both textual semantics and graph topology into a clustering-friendly representation space through a similarity-guided contrastive learning objective. 
Based on this space, we apply a bottom-up hierarchical K-Means algorithm to induce a multi-level taxonomy, and further refine it using LLM-guided clustering operations to improve semantic coherence, interpretability, and robustness. 
Second, to ensure the learned node embeddings reflect the constructed taxonomy, TIER introduces a taxonomy-informed regularization mechanism based on the Cophenetic Correlation Coefficient. 
This regularizer aligns the structure of the learned embedding space with the semantic distances encoded in the taxonomy, encouraging node representations to reflect both fine-grained and coarse-grained semantic relations.
By jointly learning node representations and hierarchical taxonomies, TIER enables more faithful modeling of real-world document semantics and provides hierarchically structured inductive signals that benefit downstream tasks such as node classification. 
Extensive experiments across multiple datasets from diverse domains demonstrate the effectiveness of hierarchical learning in our approach. 

\section{Related Work}

\subsection{Hierarchical Knowledge Learning}
Hierarchical knowledge learning aims to encode multi-level structures, such as topic taxonomies, label hierarchies, or class ontologies, into embeddings that preserve the underlying tree-like semantics.
This paradigm has demonstrated broad applicability across various domains, such as text classification~\cite{TELEClass}, image classification~\cite{HCPCC}, document hashing~\cite{HierHash}, network embedding~\cite{NetHiex}, and retrieval-augmented generation~\cite{HiRAG}. 
Specifically, one common line of work adopts contrastive learning~\cite{HGCLR, HiMulConE, HCSC} to model hierarchical relations by pulling together samples from the same sub-tree and pushing apart those from different branches. 
Alternatively, hyperbolic learning methods~\cite{HIE, HypStructure, HiT} exploit the geometry of non-Euclidean spaces to capture hierarchical distance and containment more naturally and efficiently. 
In the graph domain, recent efforts~\cite{Taxo-GNN, TG-GNN} also explored integrating external hierarchical taxonomies with GNNs to enhance node representations, further highlighting the utility of structured semantic priors in representation learning. 
However, most existing methods rely heavily on supervised signals from carefully annotated hierarchical datasets, limiting their applicability in real-world scenarios where such taxonomies are unavailable or incomplete. 
In this work, we address this limitation by first constructing a high-quality hierarchical taxonomy via hierarchical contrastive clustering and LLM-powered clustering refinement. Building upon this induced hierarchy, we further introduce a Cophenetic Correlation Coefficient-based regularization to explicitly align the learned representation space with the underlying hierarchical structure.

\subsection{Text-Rich Network Representation Learning}
At the core of learning representations on TRNs lies effectively integrating textual semantics with structural connections to produce informative node representations~\cite{CS-TAG, HTAG}. 
Early methods primarily relied on shallow textual features such as bag-of-words and skip-gram embeddings~\cite{SkipGram}, while emphasizing the design of advanced GNN architectures~\cite{GCN, SAGE, GAT}. 
Later, the advent of pre-trained language models (PLMs)~\cite{BERT, RoBERTa, SentenceBERT} brought significant improvements in capturing the contextual and nuanced meaning of text. 
These approaches integrate PLMs with GNNs or Graph Transformers~\cite{GraphFormers} using techniques such as variational inference~\cite{GLEM}, contrastive learning~\cite{GRENADE}, and masked node prediction~\cite{PATTON}, leading to significant improvements in TRN representation learning.
More recently, the emergence of large language models (LLMs)~\cite{Mistral7B, GPT4, DeepSeekV3} has introduced a new paradigm for TRN learning, due to their powerful context awareness and semantic reasoning capabilities.  
Specifically, LLMs as Enhancers can retrieve external semantic knowledge pertinent to each node to refine GNN initialization~\cite{TAPE, GAugLLM}. 
LLMs as Predictors are able to encode graph structures via carefully designed prompts and directly make predictions in an autoregressive manner~\cite{OFA, GraphWiz}. 
Moreover, LLMs as Aligners enable the alignment of LLM and GNN representations in a shared embedding space, enriching GNNs with the semantic knowledge embedded in LLMs~\cite{GraphAdapter, GraphCLIP}. 
For a comprehensive overview, readers are referred to recent surveys~\cite{LLM4Graph, LLM4Graph2, LLM4Graph3, GFM} and benchmarks~\cite{CS-TAG, GLBench, TSGFM, LLMNodeBed} on TRN learning.

\section{Preliminary}
\subsection{Problem Statement}
Our goal is to learn node representations that capture the hierarchical semantics embedded in TRNs.
Formally, a TRN is defined as $\mathcal{G} = (\mathcal{V}, \mathcal{E}, \mathcal{S})$, where $\mathcal{V}$ is the set of nodes, $\mathcal{E}$ the set of edges, and $\mathcal{S} = \{s_1, \ldots, s_n\}$ the collection of textual document associated with each node. The adjacency matrix is denoted as $\boldsymbol{A} \in \{0,1\}^{n \times n}$. 
To assess the quality and utility of learned representations, we evaluate them on node classification as a representative downstream task, where labels are available for a subset of nodes $\mathcal{V}_l$ and the goal is to predict the labels of the remaining nodes $\mathcal{V}_u$.

Traditionally, the textual attributes of nodes are encoded into $f$-dim shallow feature vectors $\boldsymbol{X} = [\boldsymbol{x}_1, \ldots, \boldsymbol{x}_n] \in \mathbb{R}^{n \times f}$ using naïve methods such as bag-of-words. 
Such transformation is adopted in most GNN papers. 
In contrast, recent LLM-based approaches directly consume the raw textual input, leveraging the rich contextual knowledge embedded in pre-trained LMs. Such models offer the potential to generate semantically rich and task-relevant embeddings, potentially improving node classification performance.

\subsection{Hierarchical Taxonomy}
A taxonomy is a specialized form of hierarchical knowledge graph that captures hypernym-hyponym (i.e., "is-a") relationships between concepts or entities~\cite{TaxoSurvey, TaxoGen}. 
Formally, a taxonomy can be represented as a tree $\mathcal{T} = (\mathcal{U}, \mathcal{R})$, where $\mathcal{U} = \{ u_1, u_2, \dots, u_{|\mathcal{U}|} \}$ denotes the set of all taxonomy categories (nodes) within the hierarchy and $\mathcal{R} = \{ r_1, r_2, \dots, r_{|\mathcal{R}|} \}$ denotes the set of directed edges, where each edge $r_i = (u_p, u_c)$ connects a parent node $u_p$ (the hypernym) to its child node $u_c$ (the hyponym).

Intuitively, the tree $\mathcal{T}$ defines a hierarchy of semantic categories that organizes domain knowledge in a structured and interpretable manner. 
For example, as illustrated in Figure~\ref{fig:taxonomy_example}, the root node "Computer Science" branches into subcategories such as "Computation \& Language" and "Information Retrieval." 
Consequently, semantically related topics like "Natural Language Understanding" and "Natural Language Generation" appear closer in the hierarchy, while more distant topics, e.g., "Natural Language Understanding" and "Recommender Systems", are placed further apart, reflecting their lower semantic proximity.

\begin{figure*}[h]
\centerline{\includegraphics[width=1.\linewidth]{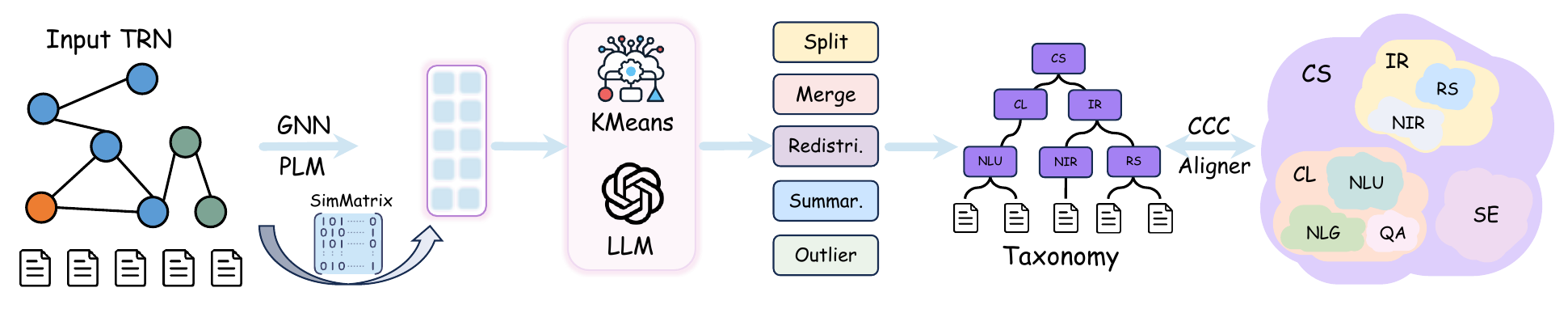}}\caption{The framework of TIER.} \label{fig:framework}
\end{figure*}

\section{Methodology}
In this section, we present our TIER framework (Figure~\ref{fig:framework}), which first constructs an implicit hierarchical taxonomy and then integrates it into node representation learning.

\subsection{Hierarchical Taxonomy Construction}
As previously discussed, taxonomies inherently encode gradual and structured semantic information among real-world documents, and effectively capturing them in node representations can significantly enhance semantic understanding in TRNs.
However, existing TRN datasets lack ground-truth taxonomies and corresponding node-level assignments, which may be attributed to the high cost of manual annotation. 
This calls for the automatic construction of high-quality and semantically coherent hierarchical taxonomies.
Existing taxonomy construction methods, however, are typically designed for individual documents~\cite{TaxoSurvey, TaxoGen} and largely neglect the graph topology that is central to TRNs.
For instance, in citation networks, citation links reflect the relevance and influence between papers, and papers with stronger connections are typically more semantically aligned. 
Motivated by this, we propose a taxonomy construction framework that synergistically integrates textual content and graph topology to uncover implicit hierarchical structures in TRNs. 
We first employ similarity-guided contrastive learning to jointly encode textual and structural information into a clustering-friendly and task-aware representation space. Based on this space, we then apply a bottom-up hierarchical K-Means algorithm, augmented with LLM-powered clustering refinement, to construct a high-quality and semantically coherent taxonomy tree. 
The details are elaborated as follows.

\subsubsection{Similarity-Guided Contrastive Learning}
The first step of our framework aims to learn clustering-friendly node representations that pull semantically similar nodes closer in the embedding space. 
Recent theoretical and empirical studies have shown that contrastive learning can produce representations that align well with semantic classes~\cite{SCL, NS4GC}, and even with hierarchical class structures~\cite{RESSL}. 
Motivated by these findings, we design a contrastive pretraining stage, guided by pairwise semantic similarities derived from both available labels and the underlying graph structure. 
Specifically, we build upon standard graph contrastive learning pipelines~\cite{GRACE, BGRL, SSGE}, which begin by applying random edge dropping and feature masking to the input graph $\mathcal{G} = (\boldsymbol{A}, \boldsymbol{X})$, to generate two augmented views, $\mathcal{G}^1 = (\boldsymbol{A}^1, \boldsymbol{X}^1)$ and $\mathcal{G}^2 = (\boldsymbol{A}^2, \boldsymbol{X}^2)$. 
These views are then encoded using a shared GNN encoder, producing $\ell_2$-normalized node embeddings $\boldsymbol{Z}_1, \boldsymbol{Z}_2 \in \mathbb{R}^{n \times d}$. 
We introduce a Similarity-Guided Contrastive Loss defined as:
\begin{equation}\label{eq:sgcl}
    \mathcal{L}_{scl} = - \sum_{\boldsymbol{S}_{ij}=1} {\boldsymbol{z}_1^i}^\top \boldsymbol{z}_2^j + \gamma \sum_{\boldsymbol{S}_{ij}=0}({\boldsymbol{z}_1^i}^\top \boldsymbol{z}_2^j)^2.
\end{equation}
Here, $\boldsymbol{S} \in \{0,1\}^{n \times n}$ is a semantic similarity indicator matrix, where $\boldsymbol{S}_{ij} = 1$ indicates that nodes $i$ and $j$ are semantically similar (e.g., belong to the same class) and should be treated as a positive pair; otherwise, they are considered a negative pair. Similar to classic contrastive losses~\cite{SimCLR, SCL}, this objective pulls together positive pairs while pushing apart negative ones, with a trade-off coefficient $\gamma$. 

Ideally, the optimal similarity matrix would be $\boldsymbol{S}^* = \boldsymbol{Y} \boldsymbol{Y}^\top$, where $\boldsymbol{Y}$ is the one-hot label matrix of all nodes. However, this ideal is unattainable in practice due to the limited availability of labeled nodes.
Classic contrastive learning methods~\cite{SimCLR, SCL, GRACE} inherently set $\boldsymbol{S} = \boldsymbol{I}$ (i.e., identity matrix), treating each sample as only positive with respect to itself while repelling all other nodes. Such a naïve identity-based approach inevitably hampers the clustering performance by discarding useful semantic signals. 
To mitigate this, we construct a more informative similarity matrix $\boldsymbol{S}$ by expanding positive pairs from two sources:

\begin{itemize}
    \item \textbf{Label-based Similarity}: If both nodes $i$ and $j$ are labeled and share the same class, we set $\boldsymbol{S}_{ij} = 1$. This is analogous to Supervised Contrastive Learning (SupCon)~\cite{SupCon}, where embeddings of intra-class samples are pulled together.

    \item \textbf{Structure-based Similarity}: If at least one of $i$ or $j$ is unlabeled, but the two nodes are directly connected in the graph, we also set $\boldsymbol{S}_{ij} = 1$. This is based on the homophily assumption (i.e., connected nodes are likely to be semantically similar), which is widely observed in real-world graphs and has been effectively leveraged in prior works~\cite{GCN, HomoGCL, BLNN}.
\end{itemize}

\noindent The following theoretical result further supports our design:

\begin{theorem}\label{the}
    Given a graph with edge homophily $h > 0.5$, the constructed similarity matrix $\boldsymbol{S}$ more closely approximates the ideal matrix $\boldsymbol{S}^*$ compared to both classic contrastive learning~\cite{SimCLR, SCL} and SupCon~\cite{SupCon}. 
    Consequently, minimizing the similarity-guided contrastive loss Eq.~\eqref{eq:sgcl} with $\boldsymbol{S}$ leads to node representations that better preserve the semantic cluster structure.
\end{theorem}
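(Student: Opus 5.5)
We make the statement precise by measuring how close a binary similarity matrix $\boldsymbol{S}$ is to $\boldsymbol{S}^*=\boldsymbol{Y}\boldsymbol{Y}^\top$ through its entrywise disagreement
\[
\|\boldsymbol{S}-\boldsymbol{S}^*\|_F^2=\#\{(i,j):\boldsymbol{S}_{ij}\neq \boldsymbol{S}^*_{ij}\}.
\]
Since all matrices involved are binary, this equals the number of false positives plus false negatives. Let $\boldsymbol{S}^{\mathrm{CL}}=\boldsymbol{I}$ denote the classic contrastive matrix. Let $\boldsymbol{S}^{\mathrm{Sup}}$ denote the SupCon matrix, which equals $1$ on the diagonal and on same-class labeled pairs. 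Our matrix $\boldsymbol{S}$ is obtained from $\boldsymbol{S}^{\mathrm{Sup}}$ by additionally setting $\boldsymbol{S}_{ij}=1$ on the edge set $\mathcal{E}'=\{(i,j)\in\mathcal{E}: i\notin\mathcal{V}_l \text{ or } j\notin\mathcal{V}_l\}$. First we note $\boldsymbol{S}^{\mathrm{CL}}\le \boldsymbol{S}^{\mathrm{Sup}}\le \boldsymbol{S}^*$ entrywise, since each entry added by SupCon is a true positive. It follows that $\|\boldsymbol{S}^{\mathrm{Sup}}-\boldsymbol{S}^*\|_F^2\le\|\boldsymbol{I}-\boldsymbol{S}^*\|_F^2$, with strict inequality as soon as some class contains two labeled nodes. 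The comparison with classic contrastive learning therefore follows once we prove the comparison with SupCon.

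The key step is to compute how adding $\mathcal{E}'$ changes the error. Each added edge $(i,j)\in\mathcal{E}'$ with $y_i=y_j$ turns a false negative into a true positive, which decreases the error by one. Each added edge with $y_i\neq y_j$ turns a true negative into a false positive, which increases the error by one. No other entries change, because entries already set to $1$ by $\boldsymbol{S}^{\mathrm{Sup}}$ are never edges in $\mathcal{E}'$ except possibly self-loops, which we exclude. Hence
\[
\|\boldsymbol{S}-\boldsymbol{S}^*\|_F^2=\|\boldsymbol{S}^{\mathrm{Sup}}-\boldsymbol{S}^*\|_F^2-\bigl(|\mathcal{E}'_{\mathrm{homo}}|-|\mathcal{E}'_{\mathrm{hetero}}|\bigr),
\]
and the bracket equals $(2h'-1)|\mathcal{E}'|$, where $h'$ is the homophily ratio restricted to $\mathcal{E}'$.

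The main obstacle is linking $h'$ to the global edge homophily $h$. The label-incident edges are arbitrary in principle, so $h>0.5$ does not force $h'>0.5$ deterministically. I would resolve this by assuming the standard setting in which the labeled set $\mathcal{V}_l$ is sampled uniformly at random, independently of the edge labels. Under this assumption every edge lands in $\mathcal{E}'$ with the same probability $1-p_l^2$, approximately, where $p_l=|\mathcal{V}_l|/n$. Then
\[
\mathbb{E}\bigl[|\mathcal{E}'_{\mathrm{homo}}|-|\mathcal{E}'_{\mathrm{hetero}}|\bigr]=(1-p_l^2)(2h-1)|\mathcal{E}|,
\]
which is strictly positive when $h>0.5$. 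This gives the claim in expectation. If needed, a Hoeffding-type concentration bound upgrades it to a high-probability statement.

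For the "consequently" part, I would argue at the level of the population optimum of Eq.~\eqref{eq:sgcl}. Drop the augmentation and take $\boldsymbol{Z}_1=\boldsymbol{Z}_2=\boldsymbol{Z}$ with unit-norm rows. The loss then has the form $-\langle \boldsymbol{S},\boldsymbol{Z}\boldsymbol{Z}^\top\rangle+\gamma\|(\boldsymbol{1}\boldsymbol{1}^\top-\boldsymbol{S})\odot \boldsymbol{Z}\boldsymbol{Z}^\top\|_F^2$. This objective is minimized by Gram matrices close to $\boldsymbol{S}$: with rank $d\ge$ number of clusters, $\boldsymbol{Z}\boldsymbol{Z}^\top=\boldsymbol{S}$ is attainable when $\boldsymbol{S}$ is a block matrix. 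I would then invoke the spectral perturbation viewpoint used in \cite{SCL}, via a Davis--Kahan argument. The top eigenspace of the optimal Gram matrix deviates from the cluster-indicator subspace of $\boldsymbol{S}^*$ by an amount controlled by $\|\boldsymbol{S}-\boldsymbol{S}^*\|_F$. A smaller disagreement therefore yields embeddings whose principal subspace better aligns with the true cluster structure. The resulting bound is on cluster separability, not an exact equality, and I would state it in that qualitative form.
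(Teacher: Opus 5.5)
Your error-counting argument is the paper's own approach: its Steps 1--2 compare $\|\boldsymbol{S}-\boldsymbol{S}^*\|_F^2$ for $\boldsymbol{I}$, $\boldsymbol{I}+\boldsymbol{Y}_l\boldsymbol{Y}_l^\top$ and $\boldsymbol{I}+\boldsymbol{Y}_l\boldsymbol{Y}_l^\top+\boldsymbol{A}$. You are more careful exactly where the paper is not. The paper writes $\Delta\mathcal{E}=|\mathcal{P}_{\text{diff}}|-|\mathcal{P}_{\text{same}}|$ over \emph{all} edges and then invokes $h>0.5$, which treats every homophilous edge as a newly corrected false negative. But edges joining two labeled same-class nodes are already positive in the SupCon matrix, and under the main-text definition labeled--labeled edges are not added at all. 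So the relevant quantity is the homophily of the edges that actually flip an entry (your $h'$), not $h$.

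Your concern is real. Take labeled nodes $1,2$ in class $a$ and $3,4$ in class $b$, an unlabeled node $5$ in class $b$, and edges $\{1,2\},\{3,4\},\{1,5\}$. Then $h=2/3$, but the only entries added to the SupCon matrix are the false positives $(1,5),(5,1)$, so the error rises from $4$ to $6$. The deterministic claim therefore fails, and an extra hypothesis such as your random labeled set (or simply $h'>0.5$) is genuinely needed. What you prove, the in-expectation version, is the correct form of Step~2.

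Two smaller points on that part. The indicators $\mathbf{1}[e\in\mathcal{E}']$ share endpoints, so plain Hoeffding does not apply; a high-probability upgrade needs a bounded-differences argument over the node labeling, and its strength depends on the degree sequence through $\sum_v\deg(v)^2$. You also correctly note that Step~1 is strict only if some class has two labeled nodes, which the paper omits.

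For the ``consequently'' part, the paper only asserts that the minimizer satisfies $\boldsymbol{Z}\boldsymbol{Z}^\top\approx\boldsymbol{S}$. Your Davis--Kahan plan is more concrete, but the step it rests on does not hold as stated. $\boldsymbol{Z}\boldsymbol{Z}^\top=\boldsymbol{S}$ is attainable for $\boldsymbol{S}^*$, which is an equivalence relation and hence PSD with unit diagonal. The edge-augmented $\boldsymbol{S}$, however, is generally neither transitive nor PSD. A positive chain $S_{12}=S_{23}=1$, $S_{13}=0$ cannot be realized by unit vectors, since it forces $z_1=z_2=z_3$; the minimizer instead trades the pull term against the penalty $\gamma G_{13}^2$.

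To apply Davis--Kahan you would first need a stability bound of the form $\|\boldsymbol{G}^\star(\boldsymbol{S})-\boldsymbol{S}^*\|\le C\,\|\boldsymbol{S}-\boldsymbol{S}^*\|$ for the actual minimizer $\boldsymbol{G}^\star$ of this nonconvex problem. Even then you would only be comparing upper bounds on subspace angles, not the angles themselves. Stating this part qualitatively, as you propose, is the honest version, and it is no weaker than what the paper establishes.
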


The full proof is provided in Appendix~\ref{app:proof}. Through Similarity-Guided Contrastive Learning, we effectively embed both textual content and topological structure into a clustering-friendly and task-aware representation space, facilitating the subsequent construction of a high-quality hierarchical taxonomy.

\subsubsection{LLM-Powered Hierarchical Clustering}\label{sec:llm_hierarchical_clustering}
Given the node representations obtained from the previous stage, we aim to construct a semantically coherent hierarchical taxonomy tree $\mathcal{T}$. 
To achieve this, we adopt a bottom-up hierarchical K-Means to organize node representations into an $L$-level taxonomy, where each layer $l$ contains $k_l$ semantic clusters encapsulating coarse-to-fine semantics. 
The detailed procedure is as follows:

\vpara{Finest-Grained Cluster Initialization.}
We first extract node representations $\boldsymbol{Z} \in \mathbb{R}^{n \times d}$ using the pretrained GNN encoder. 
These representations are then clustered using K-Means into $k_L$ clusters, denoted as $\{ \mathcal{C}_i \}_{i=1}^{k_L}$, where $k_L$ is the number of clusters at the finest hierarchy. The resulting cluster centers form a matrix $\boldsymbol{C} = [\boldsymbol{c}_1, \dots, \boldsymbol{c}_{k_L}] \in \mathbb{R}^{k_L \times d}$, which serves as the prototype embeddings for this layer. This finest-level clustering serves as the foundation for constructing higher levels of the taxonomy.

\vpara{LLM-Assisted Clustering Refinement.}
The semantic quality of the finest-level clusters is critical, as it forms the basis for the entire hierarchy. 
However, standard K-Means often yields clusters that are geometrically coherent but semantically misaligned~\cite{FracDist}, especially when semantically distinct documents are close in the embedding space. 
To overcome this limitation, we further introduce an LLM-guided refinement pipeline to enhance cluster cohesion, remove noise, and adjust incorrect groupings. 
This refinement process integrates the strong semantic understanding capabilities of LLMs and consists of the following key steps:

\begin{enumerate}
    \item \emph{Splitting Low-Cohesion Clusters}: 
    For each cluster $\mathcal{C}_i$, we compute the average cosine similarity between each member $\boldsymbol{z} \in \mathcal{C}_i$ and the cluster centroid $\boldsymbol{c}_i$ to assess semantic compactness: 
    $
    \text{cohesion}(\mathcal{C}_i) = \frac{1}{|\mathcal{C}_i|} \sum_{\boldsymbol{z} \in \mathcal{C}_i} \cos(\boldsymbol{z}, \boldsymbol{c}_i)
    $.
    A cluster with cohesion below a threshold $\tau_{\text{split}}$ is suspected of containing multiple latent topics. 
    In this case, we randomly sample $n_{\text{split}}$ texts from the cluster and prompt the LLM using Prompt~\ref{pr:split} to decide whether the cluster should be split and suggest the number of subclusters. 
    This process allows us to identify cases where geometrically close but semantically diverse documents have been incorrectly grouped together, ensuring that each resulting cluster is more semantically focused and topically consistent. 

    \item \emph{Merging Semantically Similar Clusters}: 
    For each pair of clusters $(\mathcal{C}_i, \mathcal{C}_j)$ with high centroid similarity: 
    $\cos(\boldsymbol{c}_i, \boldsymbol{c}_j) > \tau_{\text{merge}}$, 
    we prompt the LLM using $n_{\text{merge}}$ randomly selected texts from both clusters and use Prompt~\ref{pr:merge} to assess semantic overlap. If the LLM confirms the semantic similarity, the clusters are merged into a unified group. This refinement resolves semantic fragmentation, where highly related concepts are mistakenly split, leading to a more compact and cohesive taxonomy. 

    \item \emph{Redistributing Degenerate Clusters}: 
    Clusters with fewer than $n_{\text{min}}$ samples are considered unstable or noisy. 
    We reassigned each sample $\boldsymbol{z}$ in such clusters to the nearest larger cluster: $\text{assign}(\boldsymbol{z}) = \arg\max_{j \in \mathcal{J}} \cos (\boldsymbol{z},\boldsymbol{c}_j)$, where $\mathcal{J}$ is the set of sufficiently large clusters. 
    This step eliminates noisy micro-clusters that arise from outliers or sampling variance, improving cluster stability and ensuring that the taxonomy is not distorted by small, irrelevant groups.

    \item \emph{Labeling and Summarizing Clusters}: 
    For each refined cluster $\mathcal{C}_i$, we select the top-$n_{\text{close}}$ samples closest to the centroid as representatives and prompt the LLM using Prompt~\ref{pr:summarize} to generate a natural language label and summary based on the textual content: $\text{summary}(\mathcal{C}_i) = \text{LLM}(\{\text{s}_j\}_{j \in \text{Top-}n_{\text{close}}})$. 
    The resulting summaries provide an interpretable and human-readable representation of each cluster, facilitate human understanding and the subsequent outlier reassignment. 
    By encoding the semantic context of each cluster, the label and summary guide the LLM in accurately assigning outliers to the most appropriate clusters in the next step.

    \item \emph{Reassigning Outlier Samples}: 
    To further refine boundary cases, for each cluster $\mathcal{C}_i$, we identify the bottom $r$-fraction of nodes farthest from the centroid $\boldsymbol{c}_i$ as outlier points.
    For each outlier $\boldsymbol{z}_j$, we retrieve the top-$n_{\text{outlier}}$ nearest clusters and prompt the LLM with the document text $s_j$ along with the natural language labels and summaries of these candidate clusters using Prompt~\ref{pr:outlier}. 
    The LLM then selects the cluster whose semantic summary best aligns with the outlier’s content, enabling semantically informed reassignment. 
    This process can correct misallocations that arise when vector-based similarity fails to capture nuanced contextual meanings, ensuring boundary samples are placed into semantically appropriate clusters.
\end{enumerate}

\noindent Through these refinement procedures, we obtain coherent, stable, and semantically interpretable clusters, forming the basis of the lowest level of the taxonomy tree.

\vpara{Bottom-Up Hierarchy Construction.}
With the refined finest-level clusters as the leaf nodes, we construct the taxonomy tree layer by layer in a bottom-up manner, progressively capturing semantic abstraction across multiple granularities.
We begin by treating each cluster $\mathcal{C}_i^L$ at the finest level $L$ as a taxonomy node associated with its centroid $\boldsymbol{c}_i^L \in \mathbb{R}^d$. 
To form the hierarchy, we recursively apply K-Means clustering to the set of centroids at level $l$ to produce $k_{l-1}$ super-clusters $\boldsymbol{C}^{l-1} = \{ \boldsymbol{c}^{l-1}_j \}_{j=1}^{k_{l-1}}$ at level $l-1$. 
Each super-cluster centroid at level $l-1$ becomes a new node in the hierarchy, and parent-child relations are established by assigning each cluster at level $l$ to the super-cluster it was grouped into during clustering. 
This recursive clustering continues until a single root cluster is formed. 
Accordingly, the taxonomy tree $\mathcal{T} = (\mathcal{U}, \mathcal{R})$ is defined by the set of all cluster nodes and their parent-child relations across levels: 
\begin{equation}
\mathcal{U} = \left\{\mathcal{C}^l\right\}_{l=1}^L, 
\mathcal{R} = \left\{ \left(\boldsymbol{c}_i^l, \boldsymbol{c}_j^{l-1}\right) \mid \boldsymbol{c}_i^l \text{ assigned to } \boldsymbol{c}_j^{l-1} \right\}.
\end{equation}
The entire structure is thus composed by connecting fine-grained clusters upward into increasingly abstract semantic categories. 
Through this construction, our model captures not only the latent semantic similarity between nodes but also their hierarchical relationships, enabling downstream tasks to benefit from multi-level semantic context.

\subsection{Taxonomy-Informed Representation Learning}
After obtaining the taxonomy $\mathcal{T}$, our goal is to integrate this hierarchical structure into TRN learning models such that the resulting embeddings encode both fine-grained and coarse-grained semantic relations.
To achieve this, we introduce a distance-based regularization framework that encourages consistency between the taxonomy and the learned representation space. 
Central to this approach is the Cophenetic Correlation Coefficient (CCC), which we adopt as a structure-aware measure of alignment between pairwise distances in two different metric spaces: the tree-derived taxonomy space and the latent embedding space. 

\vpara{Cophenetic Correlation Coefficient.}
CCC is a widely used measure to evaluate how faithfully a hierarchical clustering tree preserves the pairwise distances among the original observations~\cite{CCC, CCC2}. 
In our context, it serves as a differentiable supervision signal to align the Euclidean geometry of node embeddings with the semantic structure encoded in the taxonomy. 
Specifically, we treat the finest-level clusters $\{ \mathcal{C}_i \}_{i=1}^{k}$ of the taxonomy as the basic semantic units. Each cluster $\mathcal{C}_i$ is represented by a prototype vector $\boldsymbol{p}_i \in \mathbb{R}^d$ (e.g., the mean of node embeddings in the cluster). 
Let $\boldsymbol{D} \in \mathbb{R}^{k \times k}$ denote the pairwise Euclidean distance matrix between prototype embeddings with $D_{ij} = \| \boldsymbol{p}_i - \boldsymbol{p}_j \|$, and 
let $\boldsymbol{D}_{\text{coph}} \in \mathbb{R}^{k \times k}$ denote the cophenetic distance matrix, where $D_{\text{coph}, ij}$ is defined as the length of the shortest path connecting the cluster $i$ and $j$ in the taxonomy tree.
Then the CCC is computed as the Pearson correlation between the upper-triangular entries of $\boldsymbol{D}$ and $\boldsymbol{D}_{\text{coph}}$:
\begin{equation}\label{eq:CCC}
    \text{CCC}(\boldsymbol{D}, \boldsymbol{D}_{\text{coph}}) = \frac{\sum_{i<j} (D_{ij} - \bar{D})(D_{\text{coph}, ij} - \bar{D}_{\text{coph}})}{\sqrt{\sum_{i<j} (D_{ij} - \bar{D})^2} \cdot \sqrt{\sum_{i<j} (D_{\text{coph}, ij} - \bar{D}_{\text{coph}})^2}}
\end{equation}
A higher CCC (close to 1) indicates that the learned representation space closely aligns with the tree structure, whereas a lower CCC suggests a mismatch between embedding geometry and semantic hierarchy.

\vpara{CCC as a Taxonomy Aligner.}
To encourage alignment between the learned cluster embeddings and the taxonomy structure, we naturally introduce a regularization term:

\begin{equation}\label{eq:regularization}
    \mathcal{L}_{\text{CCC}} = 1 - \text{CCC}(\boldsymbol{D}, \boldsymbol{D}_{\text{coph}})
\end{equation}
This loss penalizes geometric arrangements of the node representations that deviate from the tree-based semantic distances. 
Therefore, it ensures that 
1) Encouraging hierarchical smoothness: semantically close classes (e.g., sibling nodes in the taxonomy) are regularized to be closer in representation space; 
2) Maintaining coarse-to-fine separability: higher-level distinctions (e.g., across distant branches) are also respected in the latent space.

\vpara{Overall Training Objective.}
This regularization is combined with the task-specific loss (e.g., cross-entropy) over individual node representations. 
The overall objective becomes:

\begin{equation}\label{eq:loss}
\mathcal{L}_{\text{total}} = \mathcal{L}_{\text{CE}} + \lambda \cdot \mathcal{L}_{\text{CCC}}
\end{equation}
where $\lambda$ controls the influence of the taxonomy-informed regularization. 

\vpara{Time Complexity.}
Since $\mathcal{L}_{\text{CCC}}$ is defined only over finest-level clusters, it introduces minimal computational overhead while still providing strong structural inductive bias.
For a latent space of dimension $d$, each training iteration incurs at most $O(d\min(b^2, k^2))$ additional computations, where $b$ is the batch size. 
This overhead is often negligible compared to the overall computational cost of training a neural network.

\section{Experiments}
In this section, we conduct comprehensive experiments to validate the effectiveness of our proposed TIER. 
Specifically, we aim to answer the following research questions: 
\textbf{RQ1:} How does TIER compare to state-of-the-art baselines? 
\textbf{RQ2:} Does TIER learn a hierarchical representation space?
\textbf{RQ3:} How to visually understand the constructed taxonomy? 
\textbf{RQ4:} Is TIER efficient enough for TRN representation learning? 
\textbf{RQ5:} Does each component contribute positively to TIER?  

\subsection{Experimental Setup}
\vpara{Datasets.}
To provide a comprehensive evaluation of our proposed method, we adopt 8 datasets from the LLMNodeBed~\cite{LLMNodeBed} benchmark, as summarized in Table~\ref{tab:dataset_statistics}. 
These datasets span diverse domains, including academic networks, web graphs, and product co-purchase networks, and cover a wide range of scales, from thousands to hundreds of thousands of nodes. 
Specifically, Cora~\cite{TAPE}, Citeseer~\cite{LLM4Graph3}, Pubmed~\cite{TAPE}, and ArXiv~\cite{OGB} are citation networks where the task is to classify scientific papers into distinct research topics. 
WikiCS~\cite{WikiCS} is a Wikipedia-based web graph, where nodes represent entities categorized into different Wikipedia categories. 
Books, Photo, and Computer~\cite{CS-TAG} are product networks, where the goal is to classify item descriptions into various product categories. 
Further details and dataset-specific descriptions are provided in Appendix~\ref{app:datasets}.

\vpara{Baselines.}
Following the LLMNodeBed~\cite{LLMNodeBed} benchmark, we compare our method against a diverse set of baselines spanning multiple paradigms: 
Classic GNNs with Shallow Embeddings (SE): These models use classic GNN architectures with shallow text features (e.g., bag-of-words). Specifically, we include GCN$_{\text{SE}}$~\cite{GCN}, SAGE$_{\text{SE}}$~\cite{SAGE}, and GAT$_{\text{SE}}$~\cite{GAT}. 
PLM-based Methods: We fine-tune encoder-only pre-trained language models (PLMs) of different sizes, including SenBERT (66M parameters)~\cite{SentenceBERT} and RoBERTa (355M parameters)~\cite{RoBERTa}, to obtain text embeddings which are then used for classification. 
Hybrid PLM + GNN Methods: These methods integrate PLM-based textual embeddings with GNN-based structural modeling. We evaluate GIANT~\cite{GIANT} and GLEM~\cite{GLEM} as representative approaches. 
LLM-as-Encoder Methods: These models treat LLMs as node feature encoders and inject their embeddings into graph learning pipelines. We include ENGINE~\cite{ENGINE} and GCN$_{\text{LLM}}$~\cite{LLMNodeBed} in this category.
LLM-as-Explainer Methods: In this paradigm, LLMs are used to provide auxiliary semantic knowledge or explanations for node representations. We include TAPE~\cite{TAPE} and GAugLLM~\cite{GAugLLM}.
LLM-as-Predictor Methods: These models directly use LLMs to perform node classification via prompt-based or instruction-tuned generation. We include GraphGPT~\cite{GraphGPT}, LLaGA~\cite{LLaGA}, and LLM Instruction Tuning (LLM$_{\text{IT}}$)~\cite{LLMNodeBed}.

\vpara{Implementation.}
All experiments are conducted using the most up-to-date benchmark suite LLMNodeBed~\cite{LLMNodeBed} for TRNs to date. 
Our task is semi-supervised node classification, using the dataset splits and shallow input embeddings provided by LLMNodeBed.
To ensure fair comparisons, all baseline methods are implemented with consistent architectural components: GCN~\cite{GCN} as the backbone GNN, RoBERTa (355M)~\cite{RoBERTa} as the PLM, and Mistral-7B~\cite{Mistral7B} as the default LLM where applicable.
Each experiment is repeated across 5 random seeds, and we report the mean accuracy and standard deviation.
For our proposed method, initial node embeddings are obtained using frozen RoBERTa (355M) on raw text. 
The taxonomy construction and downstream node classification are trained with separate GCN encoders. 
To refine the hierarchical clusters, we prompt the DeepSeek-V3~\cite{DeepSeekV3} via API. 
Following LLMNodeBed, we search GCN hyperparameters over: number of layers $\in \{2, 3, 4\}$, hidden dimensions $\in \{64, 128, 256\}$, dropout rates $\in \{0.3, 0.5, 0.7\}$, and consider configurations with and without batch normalization and residual connections. 
The weight $\lambda$ of the taxonomy-informed regularization loss is set to 1.0 by default. 
We run our method on a Linux server equipped with an Intel(R) Xeon(R) Gold 6248 CPU @ 2.50GHz and a 32GB NVIDIA Tesla V100 GPU. 
More implementation details, including the shapes of the constructed taxonomy trees, are presented in Appendix~\ref{app:implementation}.
Our code and intermediate artifacts (e.g., constructed taxonomies) are available at:
\url{https://github.com/Cloudy1225/TIER}.

\begin{table*}[!t]
\centering
\caption{Node classification accuracy on eight TRN datasets. 
The \colorbox{orange!25}{\textbf{best}} and \colorbox{orange!10}{second-best} results are highlighted.}
\resizebox{\linewidth}{!}{
    \begin{tabular}{l|cccccccc|c}
    \toprule
    \rowcolor{COLOR_MEAN} Model & Cora & Citeseer & Pubmed & ArXiv & WikiCS & Books & Photo & Computer & Avg. \\ \midrule
    GCN$_{\text{SE}}$~\cite{GCN} & 82.30±0.19 & 70.55±0.32 & 78.94±0.27 & 71.39±0.28 & 79.86±0.19 & 68.79±0.46 & 69.25±0.81 & 71.44±1.19 & 74.06 \\
    SAGE$_{\text{SE}}$~\cite{SAGE} & 82.27±0.37 & 69.56±0.43 & 77.88±0.44 & 71.21±0.18 & 79.67±0.25 & 72.01±0.33 & 78.50±0.15 & 81.43±0.27 & 76.57 \\
    GAT$_{\text{SE}}$~\cite{GAT} & 81.30±0.67 & 69.94±0.74 & 78.49±0.70 & 71.57±0.25 & 79.99±0.65 & 74.35±0.35 & 80.40±0.45 & 83.39±0.22 & 77.43 \\ \midrule
    SenBERT~\cite{SentenceBERT} & 66.66±1.42 & 60.52±1.62 & 36.04±2.92 & 72.66±0.24 & 77.77±0.75 & 83.68±0.19 & 73.89±0.31 & 70.76±0.15 & 67.75 \\
    RoBERTa~\cite{RoBERTa} & 72.24±1.14 & 66.68±2.03 & 42.32±1.56 & 74.12±0.12 & 76.81±1.04 & 84.62±0.16 & 74.79±1.13 & 72.31±0.37 & 70.49 \\ \midrule
    GIANT~\cite{GIANT} & 81.08±1.07 & 66.02±1.28 & 79.32±0.45 & 74.29±0.10 & 80.48±0.65 & 81.99±0.37 & 71.83±0.38 & 86.16±0.12 & 77.65 \\
    GLEM~\cite{GLEM} & 81.30±0.88 & 68.80±2.46 & \cellcolor{orange!10} 81.70±1.07 & 73.55±0.22 & 76.43±0.55 & 83.28±0.39 & 76.93±0.49 & 80.46±1.45 & 77.81 \\ \midrule
    ENGINE~\cite{ENGINE} & \cellcolor{orange!10} 84.22±0.46 & \cellcolor{orange!10} 72.14±0.74 & 77.84±0.27 & 74.69±0.36 & 80.94±0.19 & 82.89±0.14 & 84.33±0.57 & 86.42±0.23 & 80.43 \\
    GCN$_{\text{LLM}}$~\cite{LLMNodeBed} & 83.33±0.75 & 71.39±0.90 & 78.71±0.45 & 74.39±0.31 & 80.94±0.16 & 83.03±0.34 & 84.84±0.47 & 88.22±0.16 & 80.61 \\ \midrule
    TAPE~\cite{TAPE} & 84.04±0.24 & 71.87±0.35 & 78.61±1.23 & 74.96±0.14 & \cellcolor{orange!10} 81.94±0.16 & \cellcolor{orange!10} 84.92±0.26 & \cellcolor{orange!10} 86.46±0.12 & \cellcolor{orange!10} 89.52±0.04 & \cellcolor{orange!10} 81.54 \\
    GAugLLM~\cite{GAugLLM} & 82.84±0.99 & 70.19±0.92 & 80.59±0.82 & 73.59±0.10 & 80.32±0.32 & 82.12±0.39 & 76.39±0.62 & 87.79±0.23 & 79.23 \\ \midrule
    GraphGPT~\cite{GraphGPT} & 64.72±1.50 & 64.58±1.55 & 70.34±2.27 & 75.15±0.14 & 70.71±0.37 & 81.13±1.52 & 77.48±0.78 & 80.10±0.76 & 73.03 \\
    LLaGA~\cite{LLaGA} & 78.94±1.14 & 62.61±3.63 & 65.91±2.09 & 74.49±0.23 & 76.47±2.20 & 83.47±0.45 & 84.44±0.90 & 87.82±0.53 & 76.77 \\
    LLM$_{\text{IT}}$~\cite{LLMNodeBed} & 67.00±0.16 & 54.26±0.22 & 80.99±0.43 & \cellcolor{orange!25} \textbf{76.08±0.00} & 75.02±0.16 & 80.92±1.38 & 71.28±1.81 & 66.99±2.02 & 71.57 \\ \midrule
    \textbf{TIER} & \cellcolor{orange!25} \textbf{84.89±0.38} & \cellcolor{orange!25} \textbf{73.70±0.92} & \cellcolor{orange!25} \textbf{81.83±1.11} & \cellcolor{orange!10} 75.96±0.13 & \cellcolor{orange!25} \textbf{82.51±0.25} & \cellcolor{orange!25} \textbf{85.15±0.21} & \cellcolor{orange!25} \textbf{87.16±0.19} & \cellcolor{orange!25} \textbf{89.80±0.13} & \cellcolor{orange!25} \textbf{82.62} \\
    \bottomrule
    \end{tabular}
}
\label{tab:mainexp}
\end{table*}

\subsection{Experimental Results}

\subsubsection{Performance Comparison (RQ1)}
Table~\ref{tab:mainexp} presents the classification accuracy of our proposed method TIER compared to a broad range of baselines. 
We observe that methods leveraging graph structure, even with shallow node embeddings, consistently outperform those relying solely on textual input, such as SenBERT, RoBERTa, and LLM$_\text{IT}$. 
This highlights the importance of structural information in TRNs.
Our method achieves the best performance on nearly all datasets, demonstrating the effectiveness of capturing hierarchical semantic structure. 
Notably, TIER relies solely on RoBERTa for node text encoding, leading to significantly fewer parameters and improved scalability compared to LLM-based baselines. 
While TAPE achieves the second-best overall performance, it requires prompting an LLM to generate lengthy explanations for every single node, which is both time- and cost-intensive. 
In contrast, our approach prompts the LLM only for selected clusters and outliers, drastically reducing the number of queries. 
On the ArXiv dataset, our performance is slightly lower than LLM$_\text{IT}$, which may be attributed to the limited text understanding capability of RoBERTa compared to powerful instruction-tuned LLMs. 
Overall, by capturing hierarchical structures through taxonomy construction and regularization, TIER provides a more semantically grounded representation of nodes in TRNs and consistently outperforms even LLM-based methods across a wide range of datasets.

\begin{figure*}[t]
\centering
\subfigure[Without Taxonomy]{\includegraphics[width=0.26\linewidth]{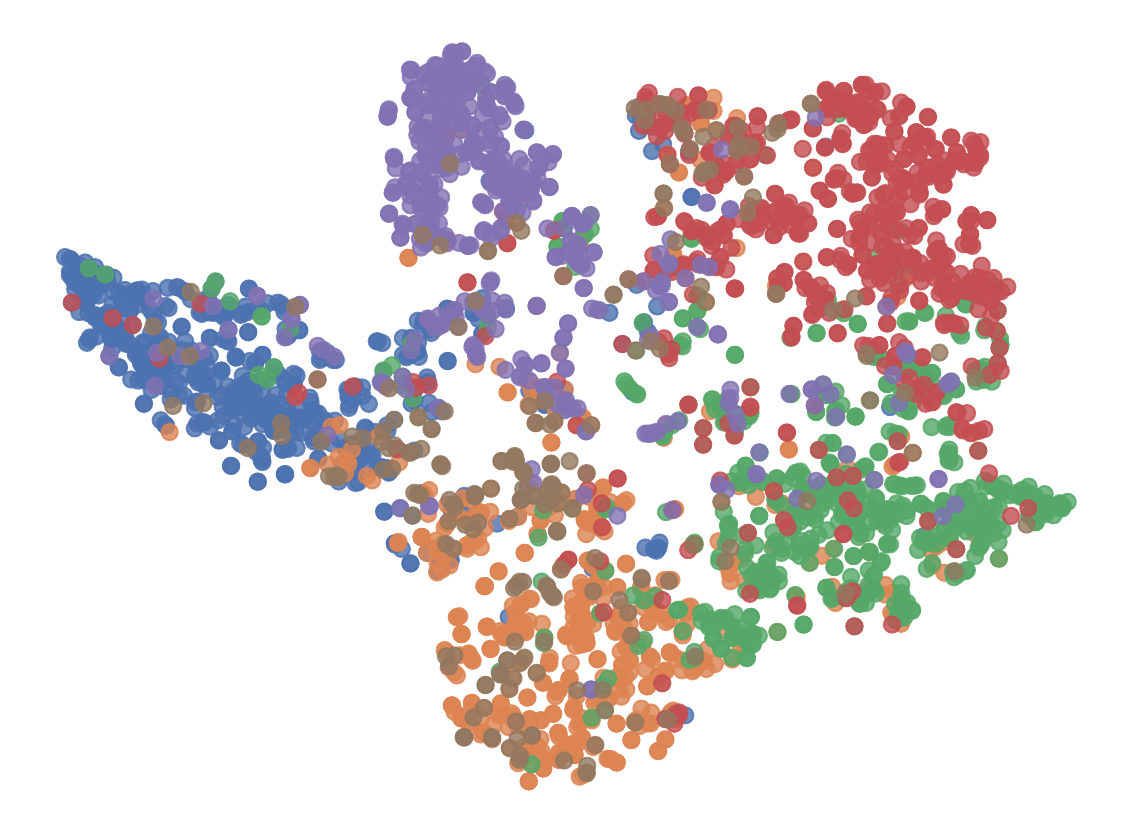}} 
\subfigure[Without Taxonomy]{\includegraphics[width=0.19\linewidth]{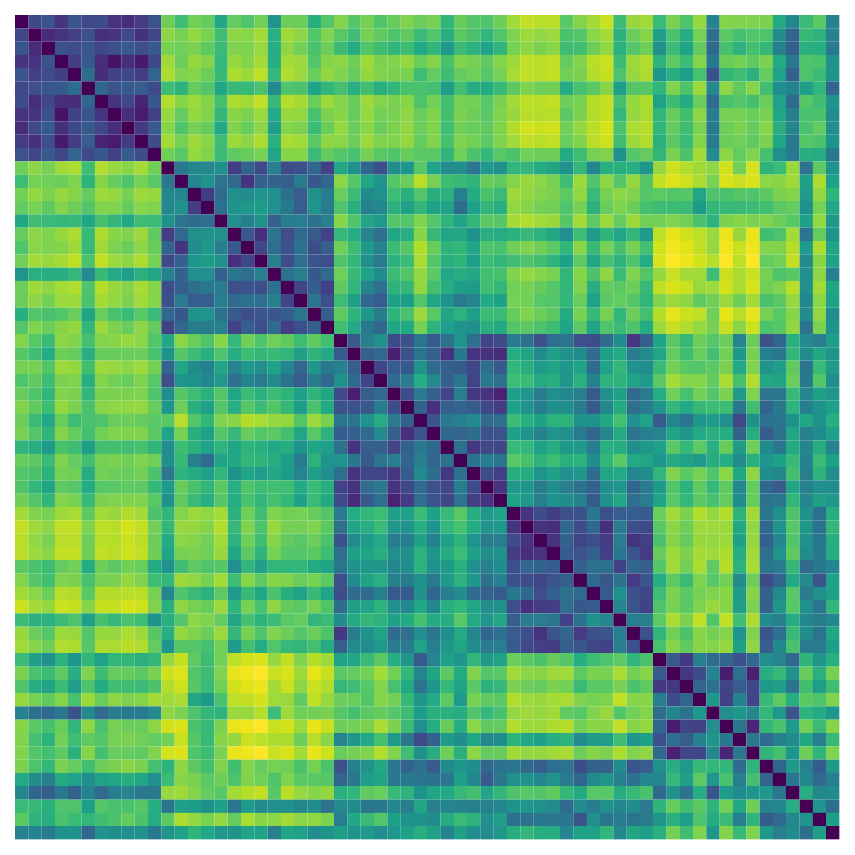}}
\subfigure[With Taxonomy]{\includegraphics[width=0.26\linewidth]{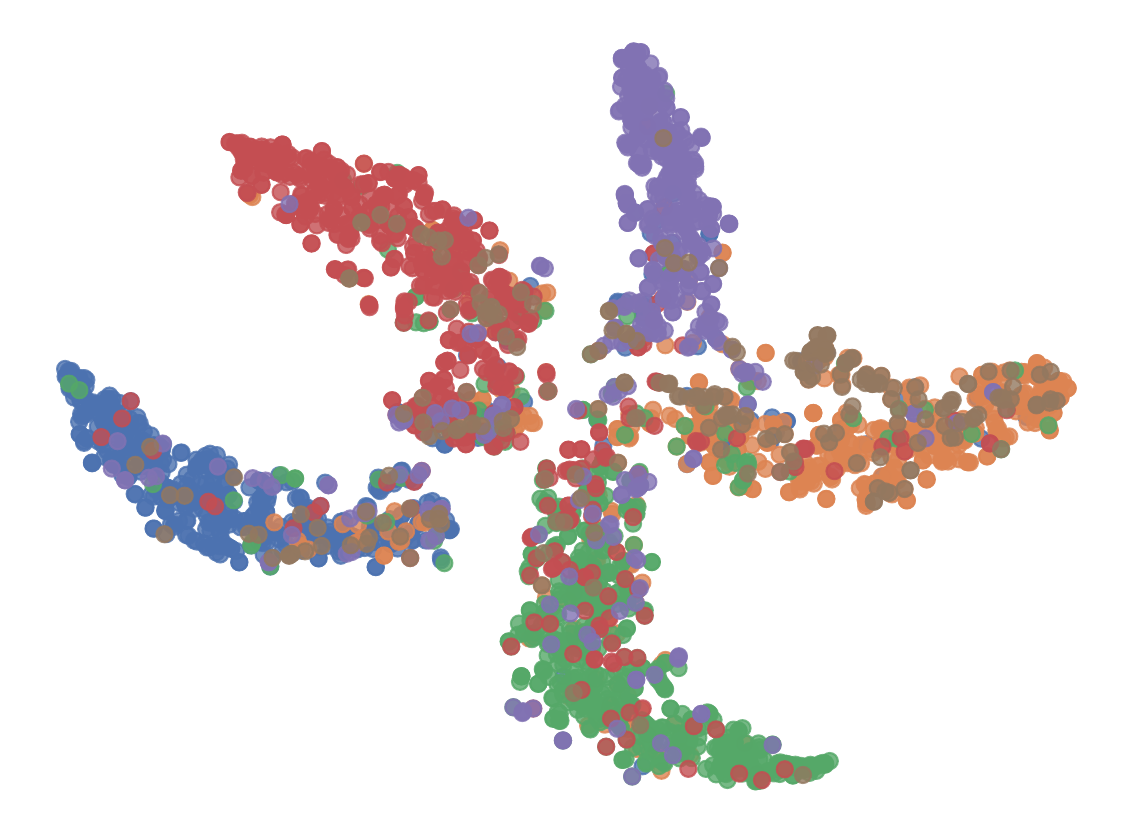}} 
\subfigure[With Taxonomy]{\includegraphics[width=0.19\linewidth]{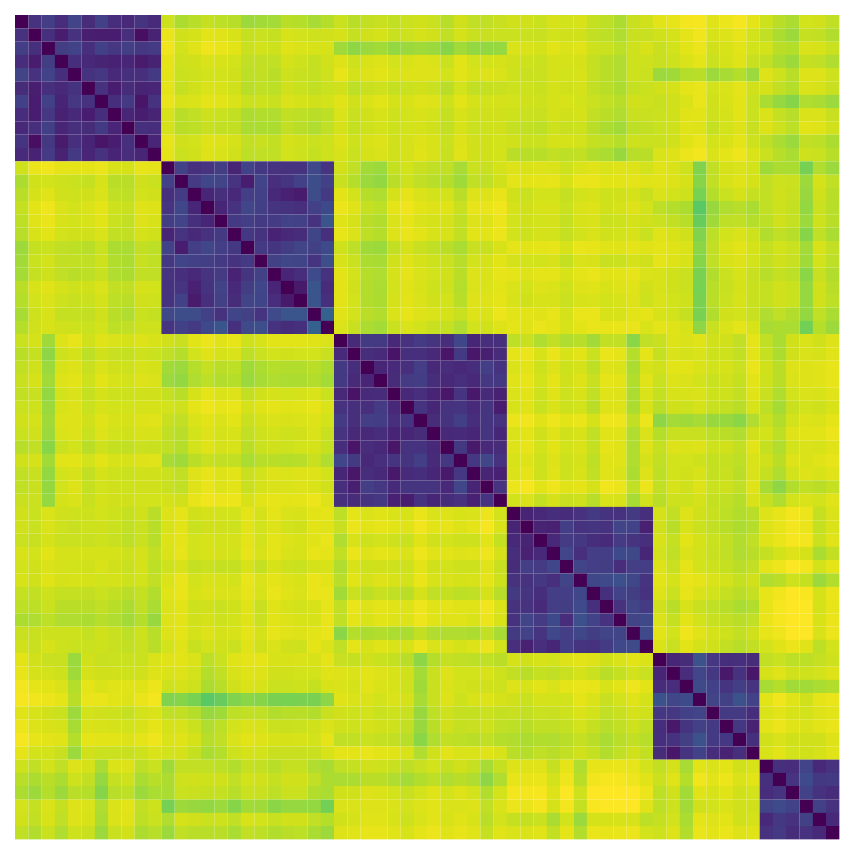}}
\caption{Visualizations of the learned node representations (colored by ground-truth labels) and the pairwise distance matrix between finest-level cluster centroids on Citeseer, with and without taxonomy regularization. 
The constructed taxonomy tree has 3 hierarchical levels with 1, 6, and 62 cluster nodes from top to bottom, respectively. Darker colors indicate smaller distances. With taxonomy regularization, clearer block structures emerge, where darker diagonal blocks correspond to coarser-grained clusters, reflecting improved semantic hierarchy in learned representation space.
}
\label{fig:visual_citeseer}
\end{figure*}

\subsubsection{Representation Visualization (RQ2)}
To assess whether TIER effectively learns a hierarchically structured representation space, we visualize the node embeddings learned with and without taxonomy-informed regularization. 
Taking Citeseer as an illustrative example, the constructed taxonomy tree consists of three levels, containing 1, 6, and 62 cluster nodes from top to bottom.
We first extract node representations from the last hidden layer of the GNN encoder and compute the finest-level cluster centroids via mean pooling over the nodes belonging to each cluster. 
We then compute the pairwise distance matrix between these centroids and visualize it in Figure~\ref{fig:visual_citeseer} (b) and (d), where darker colors indicate smaller distances.
From the distance matrices, we observe that taxonomy-informed regularization leads to significantly clearer block structures, with prominent dark diagonal blocks representing coarse-grained clusters. 
This pattern reflects improved semantic organization, as the intra-coarse-cluster distances are significantly smaller than inter-coarse-cluster distances (yielding six diagonal blocks). 
This suggests that the model learns a representation space where fine-grained clusters are embedded closer if they share a higher-level semantic category.
We further apply t-SNE~\cite{t-SNE} to project the learned node embeddings into 2D space.
As shown in Figure~\ref{fig:visual_citeseer} (a) and (c), the embeddings learned with taxonomy regularization indeed exhibit better separation of coarse-grained semantic classes.
In summary, taxonomy-informed regularization enables our model to learn a semantically meaningful and hierarchically interpretable representation space, capturing both fine-grained and coarse-grained semantics, which in turn improves class separability.
Additional visualization results on other datasets can be found in Appendix~\ref{app:representation_visualization}.


\begin{table*}[h]
\centering
\caption{Efficiency comparison in terms of training time and GPU memory (GB) on ArXiv.}
    \begin{tabular}{l|cccccccccc}
    \toprule
    \rowcolor{COLOR_MEAN} ~ & GCN$_{\text{SE}}$ & SenBERT & RoBERTa & ENGINE & GCN$_{\text{LLM}}$ & TAPE & GraphGPT & LLaGA & LLM$_{\text{IT}}$ & TIER \\ \midrule
    Mem. & 4.79 & 9.98 & 46.10 & 7.08 & 7.36 & 46.10 & 60.41 & 41.94 & 69.13 & 6.78 \\
    Time & 51.2s & 7.4m & 40.8m & 2.6h & 1.4h & 37.4h & 7.8h & 7.7h & 36.3h & 16.8m \\
    \bottomrule
    \end{tabular}
\label{tab:efficiency}
\end{table*}

\subsubsection{Taxonomy Visualization (RQ3)}

\begin{figure}[h]
\centerline{\includegraphics[width=1.\linewidth]{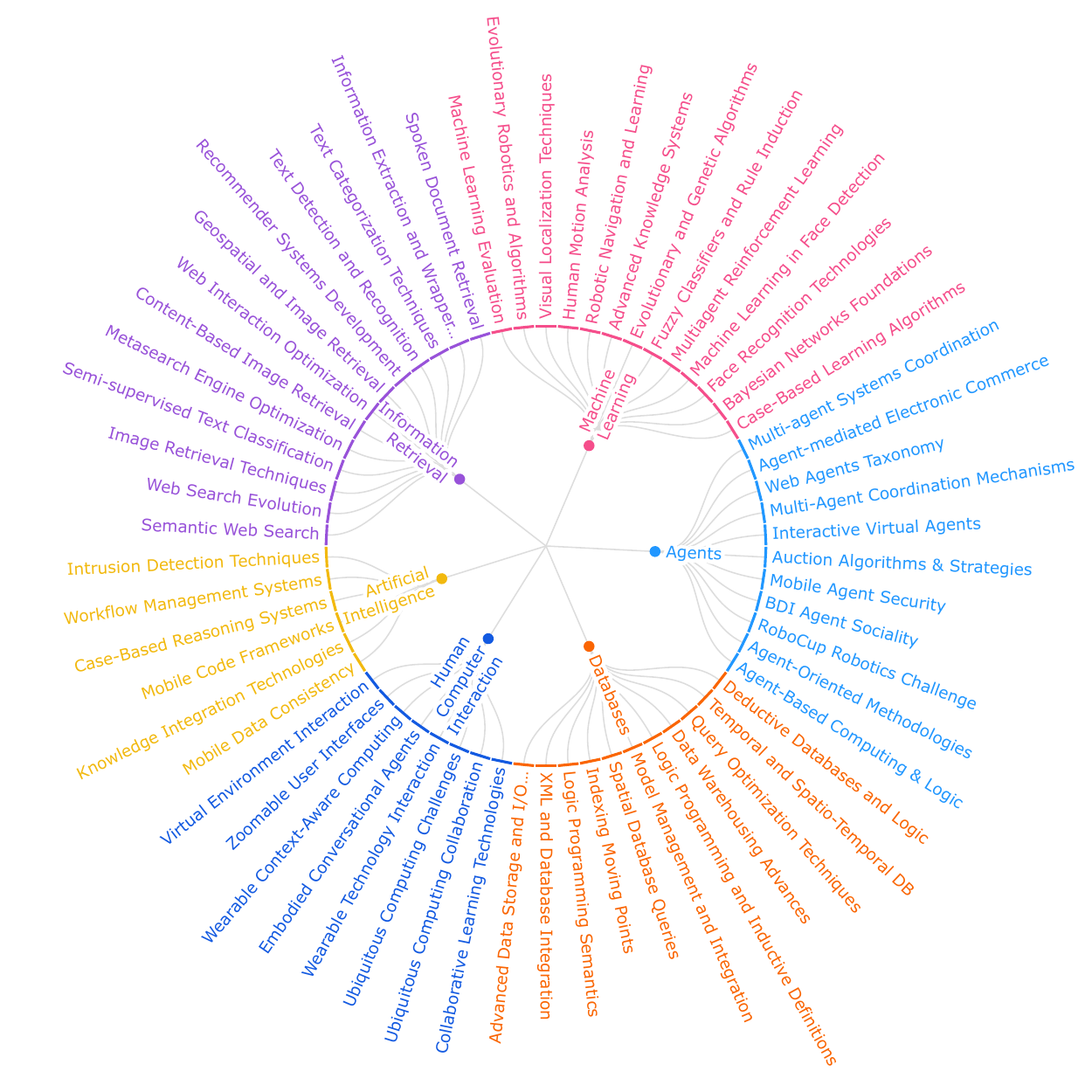}}\caption{RadialMap of the constructed taxonomy on Citeseer.}\label{fig:citeseer_taxonomy_small}
\end{figure}

In this subsection, we present a visualization of the constructed taxonomy tree, where each node is described using cluster labels generated by the LLM (see Section~\ref{sec:llm_hierarchical_clustering}). 
The RadialMap of the taxonomy for Citeseer is shown in Figure~\ref{fig:citeseer_taxonomy_small}; a larger and clearer version is provided in Figure~\ref{fig:citeseer_taxonomy}.
As observed, it is evident that the constructed taxonomy is of high quality and effectively captures hierarchical semantic relationships. 
For instance, the "Agents" coarse cluster includes finer sub-clusters whose labels almost universally contain the term "Agent", such as "Mobile Agent Security" and "BDI Agent Sociality", which correspond to various subfields within Agent-based Systems. 
Similarly, the "Information Retrieval" coarse cluster encompasses sub-clusters like Web Search, Image Retrieval, Recommender Systems, and Text Classification, which, although they might be categorized under "Machine Learning", are indeed widely used in document retrieval~\cite{TC4IR}, justifying their grouping. 
One subtle case is "Embodied Conversational Agents" (ECAs), which was placed under the broader "Human-Computer Interaction" cluster. 
While the model might have alternatively categorized it under "Agents", this assignment is semantically reasonable as ECAs are a form of intelligent user interface that facilitates rich human-computer interaction~\cite{ECA}.
Overall, these examples illustrate that our constructed taxonomy is semantically consistent, reflects fine-to-coarse relationships, and aligns well with domain-specific knowledge. 
Visualizations of taxonomies constructed on other datasets can be found in Appendix~\ref{app:taxonomy_visualization}.

\subsubsection{Efficiency Comparison (RQ4)}

To assess the scalability of our approach, we evaluate the training time and GPU memory usage of TIER on the large-scale ArXiv dataset. 
The results are summarized in Table~\ref{tab:efficiency}. 
Note that baseline results are directly taken from LLMNodeBed~\cite{LLMNodeBed}, where all methods were run on a single NVIDIA H100-80GB GPU.
In contrast, TIER is evaluated on a single NVIDIA Tesla V100 32G GPU, which has significantly lower memory and computational capacity.
Despite this hardware disadvantage, TIER demonstrates strong efficiency. 
It achieves the second-lowest memory footprint, using only 6.78 GB, which is significantly smaller than LLM-based methods such as GraphGPT and LLM$_\text{IT}$, both of which require more than 60 GB of GPU memory.
In terms of training time, TIER ranks third-fastest, completing training in 16.8 minutes, which is at least 5× faster than most LLM-based approaches. 
For instance, TAPE and LLM$_\text{IT}$ require over 36 hours, showing the computational overhead of instruction tuning or prompt-based generation for each node.
Overall, TIER achieves superior accuracy (as shown in Table~\ref{tab:mainexp}) while maintaining high efficiency in both time and memory, making it an attractive solution for real-world deployment without incurring excessive resource costs.

\subsubsection{Ablation Study (RQ5)}

We conduct ablation experiments to evaluate the contribution of key components in both the taxonomy construction and representation learning stages of TIER. Specifically, we consider the following three variants: 
\emph{TIER w/o SGCL} replaces the similarity-guided contrastive loss Eq.~\eqref{eq:sgcl} with supervised contrastive learning (SupCon)~\cite{SupCon}, removing the influence of graph structure in pertaining.
\emph{TIER w/o LLM} removes the LLM-assisted clustering refinement during taxonomy construction and instead uses pure hierarchical K-Means without semantic validation or correction. 
\emph{TIER w/o CCC} disables the taxonomy-informed regularization, optimizing the GCN solely with cross-entropy loss, thus ignoring hierarchical constraints.
Table~\ref{tab:ablation_study} reports the performance of these variants across four representative datasets from diverse domains. We observe the following:
1) Removing SGCL leads to consistent performance drops, especially on WikiCS and Photo. This confirms that incorporating structural similarity via graph-based contrastive learning is beneficial for producing cluster-friendly embeddings.
2) While hierarchical K-Means alone still yields reasonable results, removing LLM refinement causes noticeable degradation in performance, particularly on WikiCS. This indicates that LLM-guided refinement improves the coherence and semantic alignment of the taxonomy, enhancing its utility in downstream learning.
3) Removing CCC regularization results in the most significant drop. This demonstrates that encouraging the representation space to reflect the hierarchical distances among categories provides strong guidance for semantic structuring and improves classification performance.
Overall, each component contributes to the final effectiveness of TIER, and their synergy is essential for fully exploiting hierarchical signals in TRNs.

\begin{table}[h]
\centering
\caption{Ablation study results for TIER.}
    \begin{tabular}{l|cccc}
    \toprule
    \rowcolor{COLOR_MEAN} Variant & Cora & Citeseer & WikiCS & Photo \\ \midrule
    TIER & \textbf{84.89±0.38} & \textbf{73.70±0.92} & \textbf{82.51±0.25} & \textbf{87.16±0.19} \\
    w/o SGCL & 83.40±0.30 & 72.46±1.24 & 79.79±0.14 & 85.01±0.28 \\
    w/o LLM & 83.47±0.71 & 72.95±1.07 & 80.33±0.29 & 86.53±0.35 \\
    w/o CCC & 83.20±0.62 & 71.63±1.60 & 80.18±0.34 & 85.44±0.22 \\
    \bottomrule
    \end{tabular}
\label{tab:ablation_study}
\end{table}

\vpara{Additional Analysis.}
Due to space constraints, the hyperparameter analysis and additional figures are deferred in Appendix~\ref{app:hyperparameter_analysis}.


\section{Conclusion}
This paper introduces TIER, a taxonomy-informed framework for hierarchical knowledge learning on TRNs.
By constructing a taxonomy through LLM-powered hierarchical clustering and leveraging it to regularize node representations via the Cophenetic Correlation Coefficient, TIER effectively captures both fine-grained and coarse-grained semantics. 
Extensive experiments on eight benchmark datasets demonstrate the superiority of TIER, highlighting the value of learning hierarchical knowledge in TRNs.

\begin{acks}
This work is partially supported by the National Key Research and Development Program of China (2024YFB2505604), the National Natural Science Foundation of China (62306137), and the Ant Group Research Intern Program.
\end{acks}

\bibliographystyle{ACM-Reference-Format}
\bibliography{main}

\appendix

\section{Proof}\label{app:proof}

\begin{proof}
\vpara{Notation.}
To facilitate proof, let:
\begin{itemize}
    \item $n$: number of nodes; $k$: number of classes.
    
    \item $\boldsymbol{Y} \in \{0,1\}^{n \times k}$: the one-hot label matrix of all nodes.
    
    \item $\boldsymbol{Y}_l \in \{0,1\}^{n \times k}$: the partially observed label matrix, where unlabeled rows are all zeros.
    
    \item $\boldsymbol{S}^* := \boldsymbol{Y} \boldsymbol{Y}^\top \in \{0,1\}^{n \times n}$: the ground-truth pairwise similarity matrix, where $S^*_{ij} = 1$ if nodes $i$ and $j$ are in the same class.
    
    \item $\boldsymbol{A} \in \{0,1\}^{n \times n}$: the undirected binary adjacency matrix.
    \item $\boldsymbol{I}$: identity matrix of size $n \times n$.
    
    \item $h \in [0,1]$: edge homophily, defined as the proportion of edges that connect nodes of the same class.
\end{itemize}

We define the following similarity matrices:

\begin{itemize}
    \item $\boldsymbol{S}^1 = \boldsymbol{I}$: Identity-based similarity used in vanilla contrastive learning~\cite{SimCLR, SCL}, where each node is only pulled close to itself.

    \item $\boldsymbol{S}^2 = \boldsymbol{I} + \boldsymbol{Y}_l \boldsymbol{Y}_l^\top$: SupCon~\cite{SupCon}-style similarity, where node pairs with known identical labels are treated as positives.

    \item $\boldsymbol{S}^3 = \boldsymbol{I} + \boldsymbol{Y}_l \boldsymbol{Y}_l^\top + \boldsymbol{A}$: Our full construction, which incorporates identity, known label agreement, and graph connectivity.
\end{itemize}

We quantify the distance between a constructed similarity matrix $\boldsymbol{S}$ and the ground-truth matrix $\boldsymbol{S}^*$ using Frobenius norm:
\[
\mathcal{E}(\boldsymbol{S}) := \|\boldsymbol{S} - \boldsymbol{S}^*\|_F^2 = \sum_{i,j} (S_{ij} - S^*_{ij})^2
\]

\noindent Our goal is to show:
\[
\mathcal{E}(\boldsymbol{S}^3) < \mathcal{E}(\boldsymbol{S}^2) < \mathcal{E}(\boldsymbol{S}^1)
\]

\noindent and further prove that minimizing the contrastive loss using $\boldsymbol{S}^3$ leads to better cluster-preserving node representations.

\vpara{Step 1: $\boldsymbol{S}^2$ improves upon $\boldsymbol{S}^1$.}

Compared to $\boldsymbol{S}^1 = \boldsymbol{I}$, $\boldsymbol{S}^2$ adds 1s at positions $(i,j)$ where both $i$ and $j$ are labeled and share the same label ($Y_{l,i} = Y_{l,j} \neq \boldsymbol{0}$). Since $\boldsymbol{Y}_l \boldsymbol{Y}_l^\top \subset \boldsymbol{Y} \boldsymbol{Y}^\top$, these additions are consistent with the ground-truth $\boldsymbol{S}^*$.

Thus, $\boldsymbol{S}^2$ correctly fixes a subset of false negatives in $\boldsymbol{S}^1$, reducing the approximation error without introducing false positives:
\[
\mathcal{E}(\boldsymbol{S}^2) < \mathcal{E}(\boldsymbol{S}^1)
\]

\vpara{Step 2: $\boldsymbol{S}^3$ improves upon $\boldsymbol{S}^2$ under $h > 0.5$.}

Let $\mathcal{P} := \{(i,j) \mid A_{ij} = 1\}$ denote all connected pairs. Partition $\mathcal{P}$ into:
\[
\mathcal{P}_{\text{same}} := \{(i,j) \in \mathcal{P} \mid Y_i = Y_j\}, \quad
\mathcal{P}_{\text{diff}} := \mathcal{P} \setminus \mathcal{P}_{\text{same}}
\]

The homophily assumption gives:
\[
h := \frac{|\mathcal{P}_{\text{same}}|}{|\mathcal{P}|} > 0.5 \Rightarrow |\mathcal{P}_{\text{same}}| > |\mathcal{P}_{\text{diff}}|
\]

Adding $\boldsymbol{A}$ to $\boldsymbol{S}^2$ adds 1s at all positions in $\mathcal{P}$. This has the following effects:
\begin{itemize}
    \item For $(i,j) \in \mathcal{P}_{\text{same}}$, $S^3_{ij} = 1 = S^*_{ij}$: error is reduced.
    \item For $(i,j) \in \mathcal{P}_{\text{diff}}$, $S^3_{ij} = 1 \neq S^*_{ij} = 0$: error is increased.
\end{itemize}

Thus, the change in approximation error is:
\[
\Delta \mathcal{E} = |\mathcal{P}_{\text{diff}}| - |\mathcal{P}_{\text{same}}| < 0
\]

Therefore:
\[
\mathcal{E}(\boldsymbol{S}^3) < \mathcal{E}(\boldsymbol{S}^2)
\]

\vpara{Step 3: Effect on Contrastive Representation Learning.}

The similarity-guided contrastive loss is defined as:
\[
\mathcal{L}_{\text{SCL}} = -\sum_{S_{ij}=1} \boldsymbol{z}_i^\top \boldsymbol{z}_j + \gamma \sum_{S_{ij}=0} (\boldsymbol{z}_i^\top \boldsymbol{z}_j)^2
\]

Optimizing this loss encourages:
\begin{itemize}
    \item High similarity between $\boldsymbol{z}_i$ and $\boldsymbol{z}_j$ when $S_{ij}=1$
    \item Low similarity when $S_{ij}=0$
\end{itemize}

The minimizer $\boldsymbol{Z}$ of this loss will satisfy:
\[
\boldsymbol{Z} \boldsymbol{Z}^\top \approx \boldsymbol{S}
\]

Therefore, a better approximation of $\boldsymbol{S}^*$ by $\boldsymbol{S}$ directly translates to representations $\boldsymbol{Z}$ that more closely reflect ground-truth semantic clusters.

Since we have shown:
\[
\mathcal{E}(\boldsymbol{S}^3) < \mathcal{E}(\boldsymbol{S}^2) < \mathcal{E}(\boldsymbol{S}^1)
\]
we conclude that $\boldsymbol{Z}$ learned using $\boldsymbol{S}^3$ better preserve semantic clusters, compared to those obtained using $\boldsymbol{S}^2$ or $\boldsymbol{S}^1$.

\end{proof}

\section{Prompts}
This section shows the prompts used in our LLM-assisted clustering refinement (see Section~\ref{sec:llm_hierarchical_clustering} for more information).

\setlength{\intextsep}{5pt} 
\setlength{\textfloatsep}{5pt} 
\setlength{\floatsep}{5pt} 
\setlength{\abovecaptionskip}{5pt} 
\setlength{\belowcaptionskip}{5pt} 
\begin{prompt}[H]
\caption{The prompt for cluster splitting.}
\label{pr:split}
\begin{tcolorbox}[colback=gray!10, colframe=black, boxrule=1pt, arc=2pt, left=5pt, right=5pt]
Here are some documents currently in one cluster:

$\newline$
1. \textcolor{blue}{\textbf{\{Document 1\}}}

2. \textcolor{blue}{\textbf{\{Document 2\}}}

3. \textcolor{blue}{\textbf{\{Document 3\}}}

...

$\newline$
Please decide whether they should stay in one cluster, or be split into multiple subclusters based on different topics. Only respond with a single number: the number of subclusters needed. If no split is needed, respond with 1. 
\end{tcolorbox}
\end{prompt}

\begin{prompt}[H]
\caption{The prompt for cluster merging.}
\label{pr:merge}
\begin{tcolorbox}[colback=gray!10, colframe=black, boxrule=1pt, arc=2pt, left=5pt, right=5pt]
Here are two clusters of documents:

$\newline$
Cluster A:

1. \textcolor{blue}{\textbf{\{Document A1\}}}

2. \textcolor{blue}{\textbf{\{Document A2\}}}

3. \textcolor{blue}{\textbf{\{Document A3\}}}

...

$\newline$
Cluster B:

1. \textcolor{blue}{\textbf{\{Document B1\}}}

2. \textcolor{blue}{\textbf{\{Document B2\}}}

3. \textcolor{blue}{\textbf{\{Document B3\}}}

...

$\newline$
Determine whether the two clusters are about the same or highly similar topic and should be merged. Return only 1 if they should be merged, or 0 if they should remain separate.
\end{tcolorbox}
\end{prompt}

\begin{prompt}[H]
\caption{The prompt for cluster labeling and summarization.}
\label{pr:summarize}
\begin{tcolorbox}[colback=gray!10, colframe=black, boxrule=1pt, arc=2pt, left=5pt, right=5pt]
Here are some documents from the same cluster:

$\newline$
1. \textcolor{blue}{\textbf{\{Document 1\}}}

2. \textcolor{blue}{\textbf{\{Document 2\}}}

3. \textcolor{blue}{\textbf{\{Document 3\}}}

...

$\newline$
Please:

- Generate a short topic label (2–5 words)

- Summarize the common theme in 1–2 sentences

$\newline$
Output format (in JSON):

\{

  "label": "[label]",
  
  "summary": "[summary]"
  
\}
\end{tcolorbox}
\end{prompt}

\section{Additional Experimental Details}
\subsection{Datasets}~\label{app:datasets}
To provide a comprehensive evaluation of our proposed method, we adopt 8 datasets from the LLMNodeBed~\cite{LLMNodeBed} benchmark, as summarized in Table~\ref{tab:dataset_statistics}. 
These datasets span diverse domains, including academic networks, web graphs, and product co-purchase networks, and cover a wide range of scales, from thousands to hundreds of thousands of nodes. 
Within LLMNodeBed, each dataset is stored in \texttt{.pt} format using PyTorch, which includes shallow embeddings, raw text of nodes, edge indices, labels, and data splits for convenient loading. 
The processed data is publicly available at \url{https://huggingface.co/datasets/xxwu/LLMNodeBed}. 
A description of these datasets is provided below:

\begin{itemize}
    \item \textbf{Academic Networks:} The \textbf{Cora}, \textbf{Citeseer}, \textbf{Pubmed}~\cite{GCN}, and \textbf{ogbn-arXiv} (abbreviated as "ArXiv")~\cite{OGB} datasets consist of nodes representing papers, with edges indicating citation relationships. 
    The associated text attributes include each paper's title and abstract, which we use the collected version as follows: Cora and Pubmed from~\citet{TAPE}, Citeseer from~\citet{LLM4Graph3}. 
    Within the dataset, each node is labeled according to its category. 
    For example, the ArXiv dataset includes 40 CS sub-categories such as cs.AI (Artificial Intelligence) and cs.DB (Databases).

    \item \textbf{Web Link Network:} In the \textbf{WikiCS} dataset~\cite{WikiCS}, each node represents a Wikipedia page, and edges indicate reference links between pages. 
    The raw text for each node includes the page name and content, which was collected by \citet{OFA}. 
    The classification goal is to categorize each entity into different Wikipedia categories.

    \item \textbf{E-Commerce Networks:} The \textbf{Ele-Photo} (abbreviated as "Photo") and \textbf{Ele-Computer} (abbreviated as "Computer") datasets are derived from the Amazon Electronics dataset~\cite{Amazon}, where each node represents an item in the Photo or Computer category. The \textbf{Books-History} (abbreviated as "Books") dataset comes from the Amazon Books dataset, where each node corresponds to a book in the history category. 
    We utilize the processed datasets released in~\citet{CS-TAG}. 
    In these e-commerce networks, edges indicate co-purchase or co-view relationships. The associated text for each item includes descriptions, e.g., book titles and summaries, or user reviews. 
    The classification task involves categorizing these products into fine-grained sub-categories.
\end{itemize}

\begin{prompt}[t]
\caption{The prompt for outlier reassignment.}
\label{pr:outlier}
\begin{tcolorbox}[colback=gray!10, colframe=black, boxrule=1pt, arc=2pt, left=5pt, right=5pt]
Here is a document and a list of cluster summaries:

$\newline$
Document:

\textcolor{blue}{\textbf{\{Text\}}}

$\newline$
Cluster 1:

Label: \textcolor{blue}{\textbf{\{Label 1\}}}

Summary: \textcolor{blue}{\textbf{\{Summary 1\}}}

$\newline$
Cluster 2:

Label: \textcolor{blue}{\textbf{\{Label 2\}}}

Summary: \textcolor{blue}{\textbf{\{Summary 2\}}}

$\newline$
Cluster 3:

Label: \textcolor{blue}{\textbf{\{Label 3\}}}

Summary: \textcolor{blue}{\textbf{\{Summary 3\}}}

...

$\newline$
Decide which cluster the document best belongs to. Only return the number of the best matching cluster (e.g., 1, 2, 3, ...).
\end{tcolorbox}
\end{prompt}

\begin{table}[t!]
    \begin{center}
    \setlength{\tabcolsep}{3.5pt}
    {\caption{Dataset statistics including the number of nodes and edges, the average number of tokens in the textual descriptions associated with nodes, the number of classes, and the domains they belong to.}\label{tab:dataset_statistics}}
    \begin{tabular}{lccccc}
    \toprule
    Dataset & \#Nodes & \#Edges & \#Tokens & \#Classes & Domain \\
    \midrule 
    Cora & 2,708 & 5,429 & 183.4 & 7 & CS Citation \\
    Citeseer & 3,186 & 4,277 & 210.0 & 6 & CS Citation \\
    Pubmed & 19,717 & 44,338 & 446.5 & 3 & Bio Citation \\
    ArXiv & 169,343 & 1,166,243 & 239.8 & 40 & CS Citation \\
    WikiCS & 11,701 & 216,123 & 629.9 & 10 & Web Link \\
    Books & 41,551 & 358,574 & 337.0 & 12 & E-Commerce \\
    Photo & 48,362 & 500,928 & 201.5 & 12 & E-Commerce \\
    Computer & 87,229 & 721,081 & 123.1 & 10 & E-Commerce \\
    \bottomrule
    \end{tabular}
    \end{center}
\end{table}

\subsection{Implementation}~\label{app:implementation}
All experiments are conducted using the most up-to-date benchmark suite LLMNodeBed~\cite{LLMNodeBed} for TRNs to date. 
Our task is semi-supervised node classification, using the dataset splits and shallow input embeddings provided by LLMNodeBed.
To ensure fair comparisons, all baseline methods are implemented with consistent architectural components: GCN~\cite{GCN} as the backbone GNN, RoBERTa (355M)~\cite{RoBERTa} as the PLM, and Mistral-7B~\cite{Mistral7B} as the default LLM where applicable.
Each experiment is repeated across 5 random seeds, and we report the mean accuracy and standard deviation.
For our proposed method, initial node embeddings are obtained using frozen RoBERTa (355M) on raw text. 
The taxonomy construction and downstream node classification are trained with separate GCN encoders. 
To refine the hierarchical clusters, we prompt the DeepSeek-V3~\cite{DeepSeekV3} via API. 
Following LLMNodeBed, we search GCN hyperparameters over: number of layers $\in \{2, 3, 4\}$, hidden dimensions $\in \{64, 128, 256\}$, dropout rates $\in \{0.3, 0.5, 0.7\}$, and consider configurations with and without batch normalization and residual connections. 
The trade-off weights $\gamma$ and $\lambda$ of the taxonomy-informed regularization loss are set to 1.0 by default. 
For LLM-Assisted Clustering Refinement, we use the following hyperparameters:
$\tau_{\text{split}} = 0.75$, $n_{\text{split}} = 20$, $\tau_{\text{merge}} = 0.9$, $n_{\text{merge}} = 10$, $n_{\text{min}} = 10$, $n_{\text{close}} = 10$, $r = 0.05$, and $n_{\text{outlier}} = 3$.
The shape of the constructed taxonomy tree (i.e., its depth and number of nodes per level) is primarily determined by the dataset size and granularity. The default settings for each dataset are: Cora (1, 7, 64), Citeseer (1, 6, 64), Pubmed (1, 3, 16, 64), WikiCS (1, 10, 128), Books (1, 12, 64, 256), Photo (1, 12, 64, 256), Computer (1, 10, 128, 512), ArXiv (1, 40, 128, 512, 2048).
Note that the final shape of the taxonomy tree may deviate slightly from the target due to LLM-based refinement (e.g., cluster splits and merges).
We run our method on a Linux server equipped with an Intel(R) Xeon(R) Gold 6248 CPU @ 2.50GHz and a 32GB NVIDIA Tesla V100 GPU. 
Our code and intermediate artifacts (e.g., constructed taxonomies) are available at:
\url{https://github.com/Cloudy1225/TIER}.

\begin{table*}[t]
\centering
\caption{Performance with different taxonomy tree structures.}
\label{tab:taxonomy_shape}
\begin{tabular}{l|cccccc}
\toprule
\multirow{2}{*}{Cora} & (1, 3, 64) & (1, 7, 64) & (1, 14, 64) & (1, 21, 64) & (1, 4, 16, 64) & (1, 7, 21, 64) \\
                      & 82.58±1.28 & \textbf{84.89±0.38} & 83.61±2.34 & 83.31±0.43 & 82.69±1.47 & 84.26±1.41 \\
\midrule
\multirow{2}{*}{Citeseer} & (1, 3, 64) & (1, 6, 64) & (1, 12, 64) & (1, 18, 64) & (1, 4, 16, 64) & (1, 6, 18, 64) \\
                          & 72.16±0.921 & \textbf{73.70±0.92} & 71.72±1.29 & 72.10±1.81 & 71.55±1.084 & 73.36±1.32 \\
\midrule
\multirow{2}{*}{WikiCS} & (1, 5, 128) & (1, 10, 128) & (1, 20, 128) & (1, 30, 128) & (1, 10, 32, 128) & (1, 10, 64, 128) \\
                        & 79.83±0.04 & \textbf{82.51±0.25} & 80.62±0.37 & 80.83±0.38 & 82.09±0.33 & 81.94±0.68 \\
\midrule
\multirow{2}{*}{Photo} & (1, 6, 256) & (1, 12, 256) & (1, 24, 256) & (1, 36, 256) & (1, 12, 32, 256) & (1, 12, 64, 256) \\
                       & 85.73±0.21 & 86.61±0.209 & 86.32±0.25 & 86.06±0.21 & 86.16±0.23 & \textbf{87.16±0.19} \\
\bottomrule
\end{tabular}
\end{table*}

\subsection{Taxonomy Visualization}\label{app:taxonomy_visualization}
Figures \ref{fig:citeseer_taxonomy}, \ref{fig:pubmed_taxonomy}, \ref{fig:wikics_taxonomy}, and \ref{fig:photo_taxonomy} visualize the constructed taxonomy on Citeseer, Pubmed, WikiCS, and Photo, respectively. Due to the large shape of the constructed taxonomy on Photo, we present 5 clusters at the second level.

\subsection{Representation Visualization}\label{app:representation_visualization}
Figures \ref{fig:visual_pubmed}, \ref{fig:visual_wikics}, and \ref{fig:visual_photo} visualize the learned node representations on Pubmed, WikiCS, and Photo, respectively.

\subsection{Hyperparameter Analysis}\label{app:hyperparameter_analysis}

\vpara{On the trade-off weight $\lambda$.}
We conduct a sensitivity analysis on the hyperparameter $\lambda$, which controls the strength of the taxonomy-informed regularization loss in Eq.\eqref{eq:loss}.
Figure~\ref{fig:lambda} reports the performance of TIRAG under varying values of $\lambda$ across four representative datasets.
We observe that introducing the regularization term ($\lambda > 0$) consistently improves performance over the non-regularized variant ($\lambda = 0$), verifying the effectiveness of enforcing structural alignment between the learned representations and the constructed taxonomy.
Performance generally improves as $\lambda$ increases, peaking around $\lambda = 1.0$ on most datasets, and then gradually plateaus or slightly decreases.
This demonstrates the importance of balancing between task-specific supervision and structure-aware regularization.
Overall, TIRAG remains stable and effective across a broad range of $\lambda$ values, suggesting the robustness of our regularization strategy.

\begin{figure}[t]
\centering
\subfigure[Cora]{\includegraphics[width=0.49\linewidth]{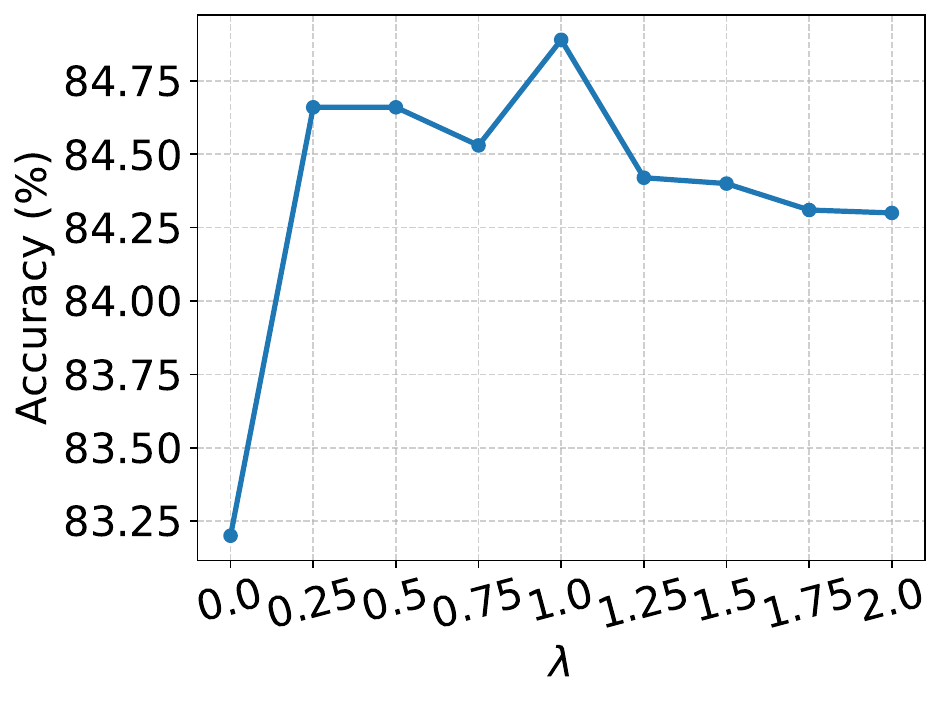}} 
\subfigure[Citeseer]{\includegraphics[width=0.49\linewidth]{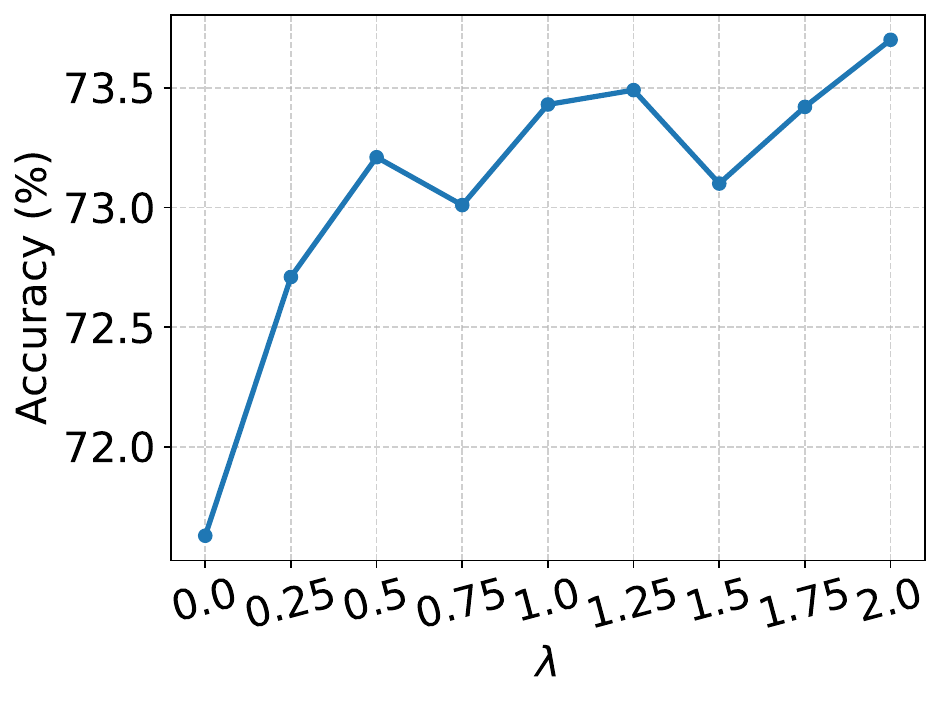}}
\subfigure[WikiCS]{\includegraphics[width=0.49\linewidth]{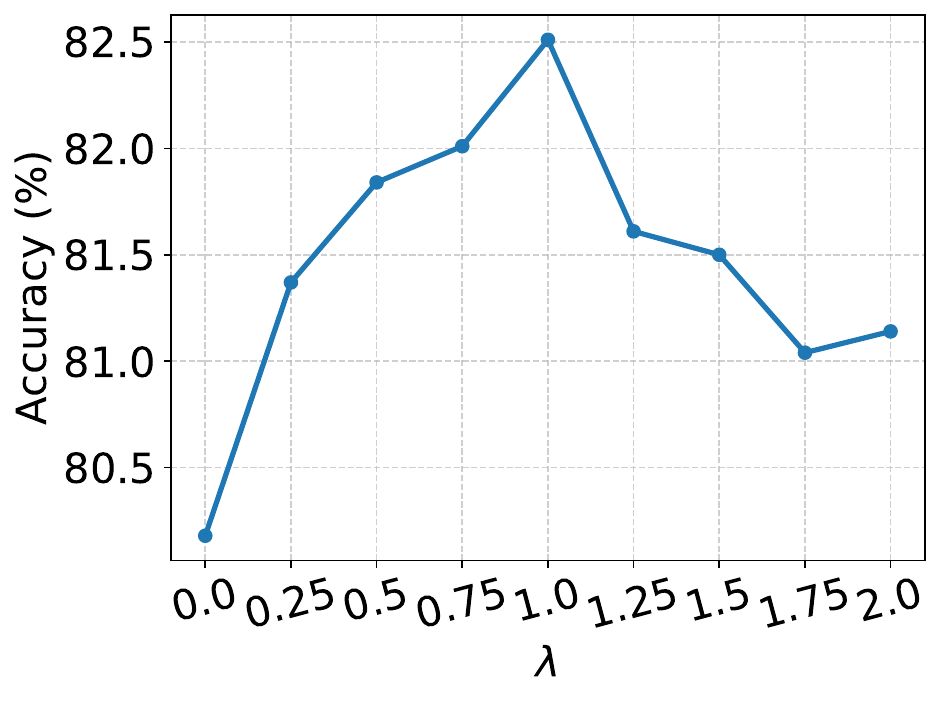}} 
\subfigure[Photo]{\includegraphics[width=0.49\linewidth]
{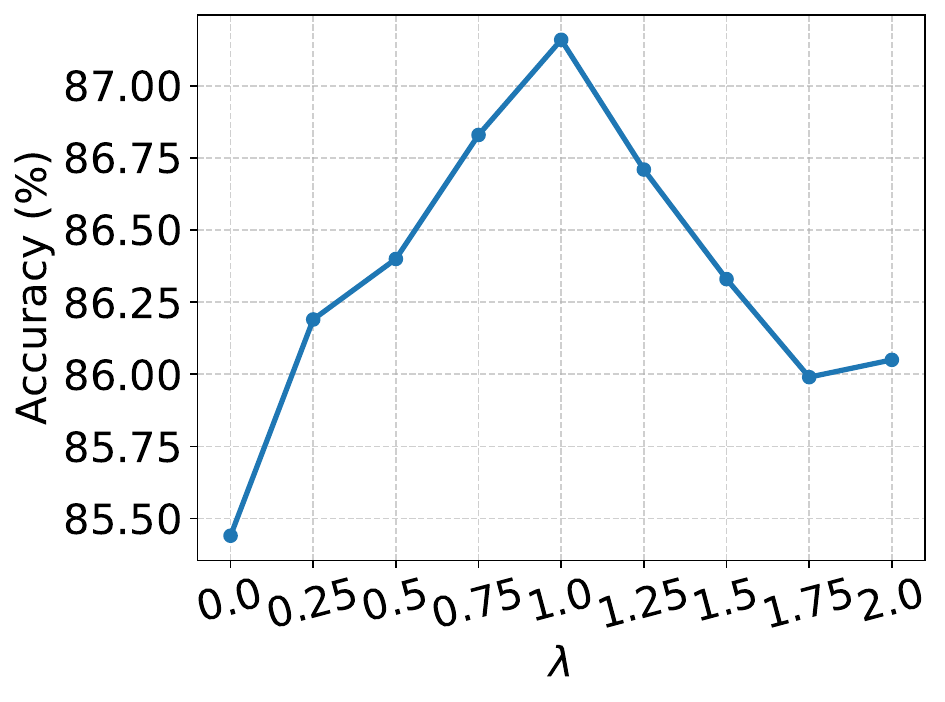}}
\caption{How the accuracy varies with different values of $\lambda$.}
\label{fig:lambda}
\end{figure}

\vpara{On the taxonomy tree shape.}
We further investigate the impact of the constructed taxonomy's shape, i.e., its depth and width, on the final performance.
Specifically, we vary the number of clusters per level and the number of hierarchical levels in the taxonomy tree, and evaluate the resulting classification accuracy on four representative datasets.
The results are shown in the tables below.
We observe that properly aligning the taxonomy shape with the characteristics of each dataset significantly improves performance. 
In particular, configurations with a number of top-level clusters approximately matching the number of ground-truth classes tend to yield better results. 
This suggests that aligning coarse-level taxonomy with label-level semantics provides a stronger structural prior, helping guide downstream node classification.

\subsection{Generalization to Link Prediction}

While our primary evaluation focuses on node classification, we further investigate whether the hierarchical representations learned by TIER generalize to other graph tasks, specifically link prediction. 
We note that downstream tasks more semantically related to taxonomy structures (e.g., classification, retrieval, and link prediction) are more likely to benefit from hierarchical knowledge, while tasks that are less taxonomy-aligned (e.g., anomaly detection) may observe limited gains.

\vpara{Setup.}
Following the experimental protocol of~\citet{LPBench}, we adopt three backbone models of increasing sophistication: GCN, Graph Autoencoder (GAE), and the state-of-the-art model BUDDY~\cite{BUDDY}. 
For each backbone, we compare the standard variant against one augmented with our taxonomy-informed regularization (``+Ours''). 
We report Mean Reciprocal Rank (MRR) on Cora and Citeseer.

\vpara{Results.}
Table~\ref{tab:link_prediction} summarizes the MRR results on Cora and Citeseer. 
Across all three backbone models and both datasets, incorporating our taxonomy-informed regularization consistently improves LP performance. 
For instance, on Cora, BUDDY benefits from a notable improvement of 3.76 points (26.40 → 30.16), and on Citeseer, GCN improves by 3.87 points (50.01 → 53.88). 
These gains can be attributed to the fact that hierarchical representations capture fine-grained semantic proximity between nodes, making semantically related nodes more likely to be correctly linked.
The results confirm that TIER enhances not only node classification but also other tasks that benefit from structured semantic organization.

\begin{table}[h]
\centering
\caption{Link prediction results (MRR, \%) on Cora and Citeseer. ``+Ours'' denotes the backbone augmented with our taxonomy-informed regularization.}
\begin{tabular}{l|cc|cc}
\toprule
\rowcolor{COLOR_MEAN} & \multicolumn{2}{c|}{Cora} & \multicolumn{2}{c}{Citeseer} \\
\rowcolor{COLOR_MEAN} Model & Base & +Ours & Base & +Ours \\
\midrule
GCN     & 32.50 & \textbf{34.69} & 50.01 & \textbf{53.88} \\
GAE     & 29.98 & \textbf{32.43} & 63.33 & \textbf{64.53} \\
BUDDY & 26.40 & \textbf{30.16} & 59.48 & \textbf{61.30} \\
\bottomrule
\end{tabular}
\label{tab:link_prediction}
\end{table}

\begin{figure*}[t]
\centering
\subfigure[Without Taxonomy]{\includegraphics[width=0.26\linewidth]{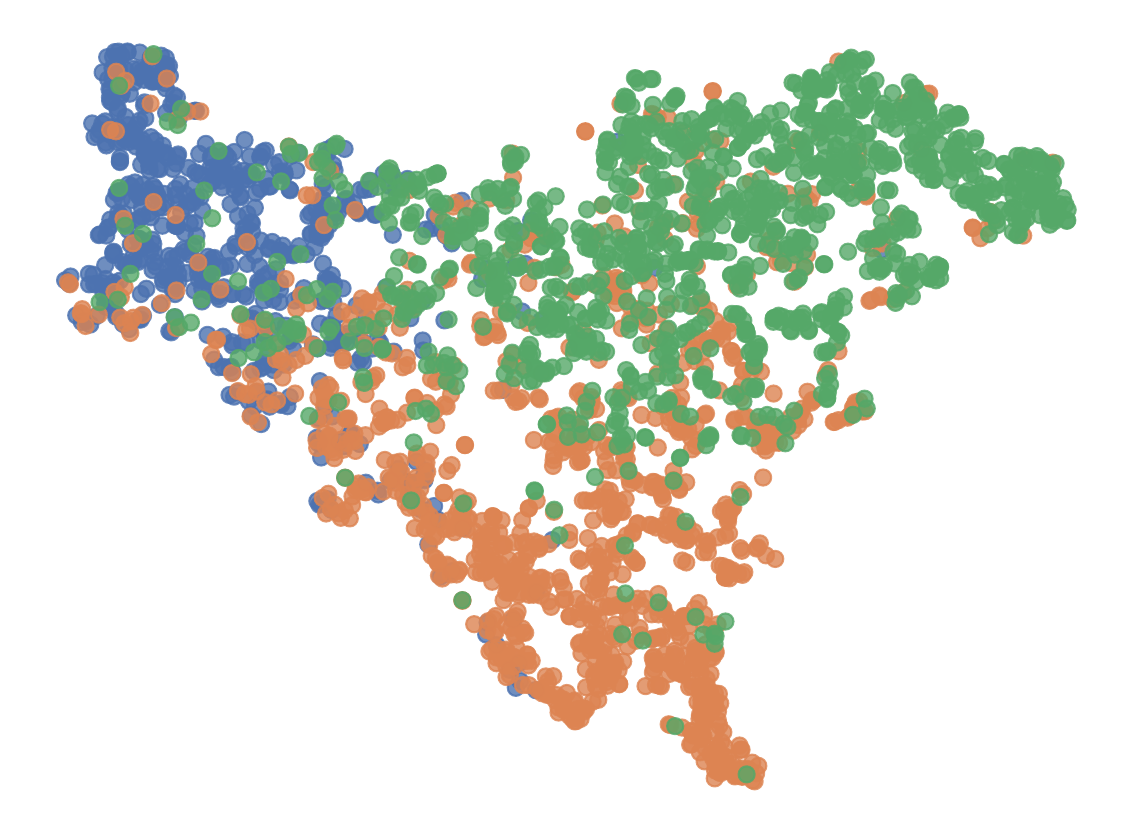}} 
\subfigure[Without Taxonomy]{\includegraphics[width=0.19\linewidth]{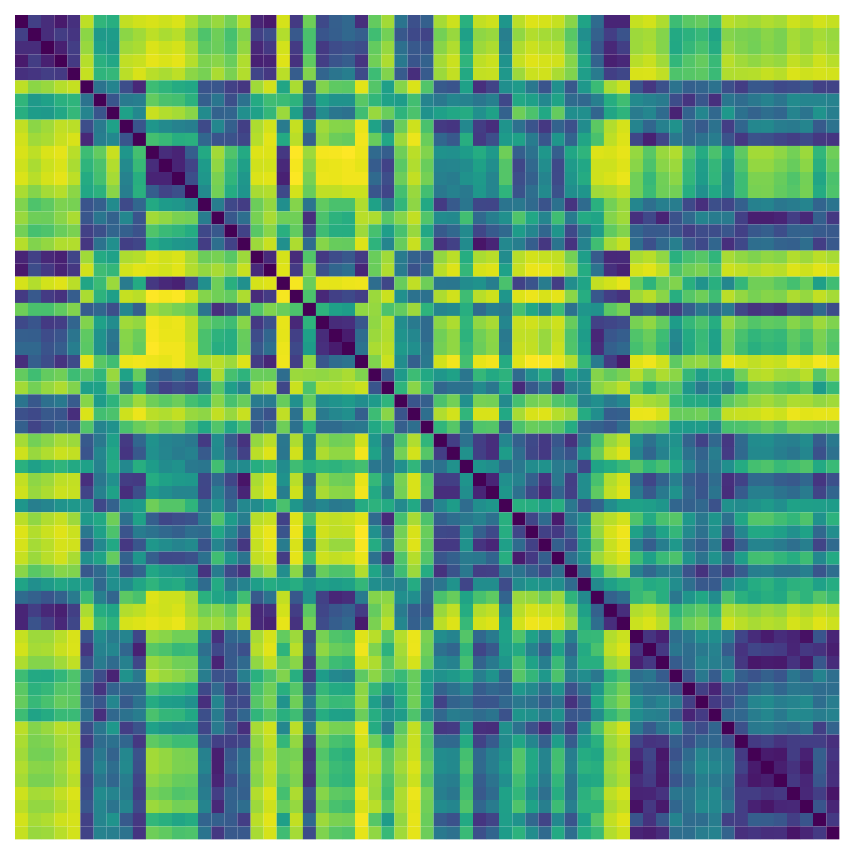}}
\subfigure[With Taxonomy]{\includegraphics[width=0.26\linewidth]{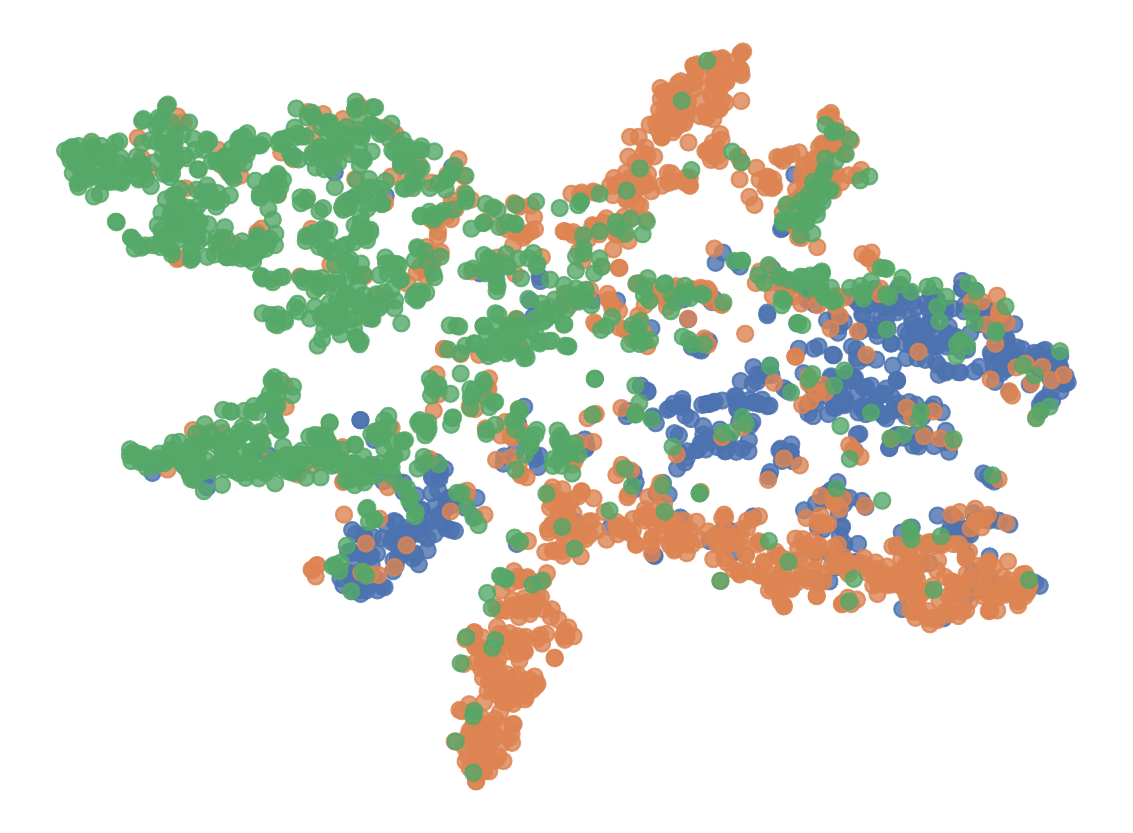}} 
\subfigure[With Taxonomy]{\includegraphics[width=0.19\linewidth]{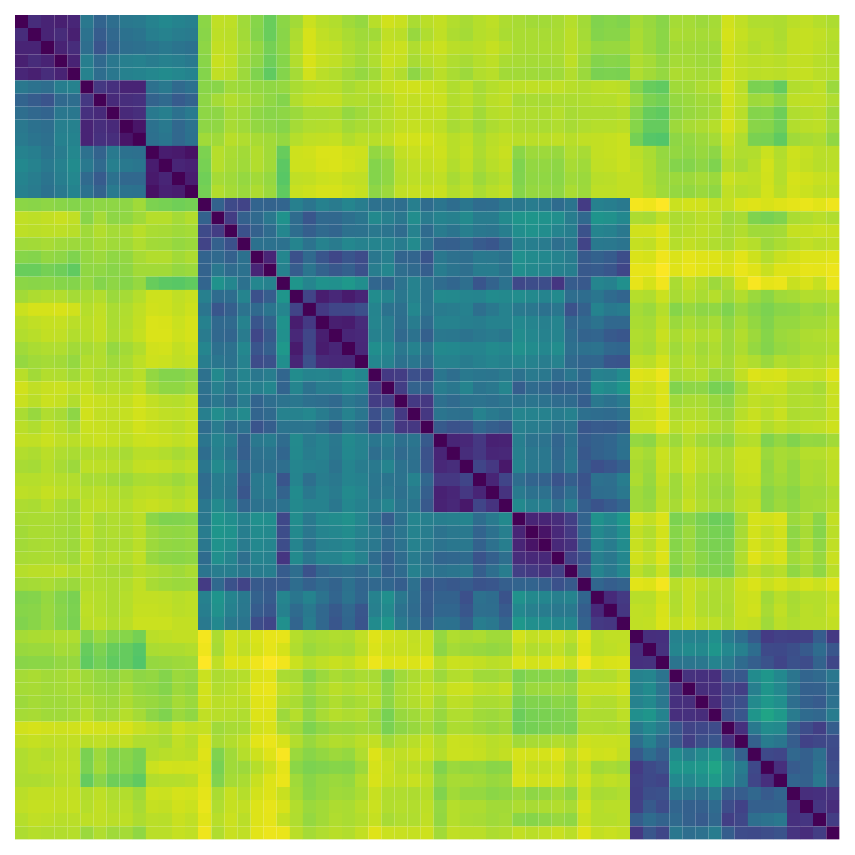}}
\caption{Visualizations of the learned node representations (colored by ground-truth labels) and the pairwise distance matrix between finest-level cluster centroids on Pumbed, with and without taxonomy regularization. The constructed taxonomy tree has 4 hierarchical levels with 1, 3, 16, and 63 cluster nodes from top to bottom, respectively. Darker colors indicate smaller distances. With taxonomy regularization, clearer block structures emerge, where darker diagonal blocks correspond to coarser-grained clusters, reflecting improved semantic hierarchy in learned representation space.}
\label{fig:visual_pubmed}
\end{figure*}

\begin{figure*}[t]
\centering
\subfigure[Without Taxonomy]{\includegraphics[width=0.26\linewidth]{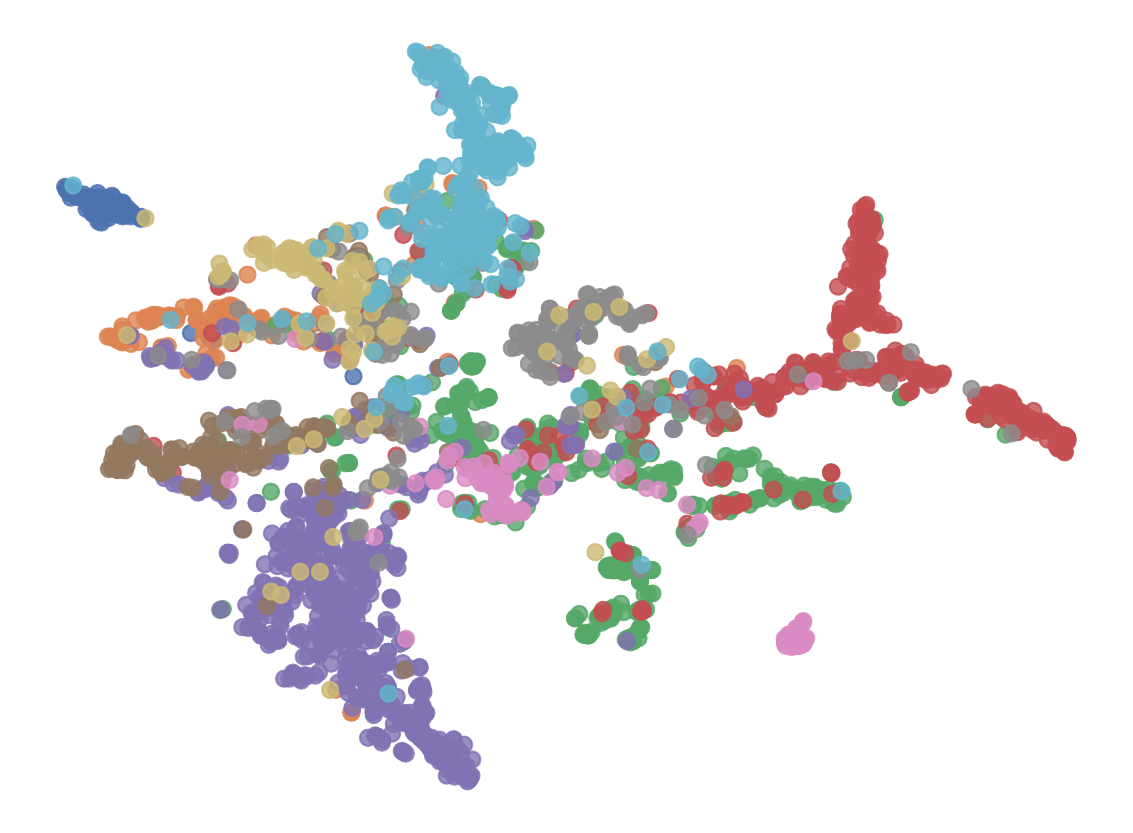}} 
\subfigure[Without Taxonomy]{\includegraphics[width=0.19\linewidth]{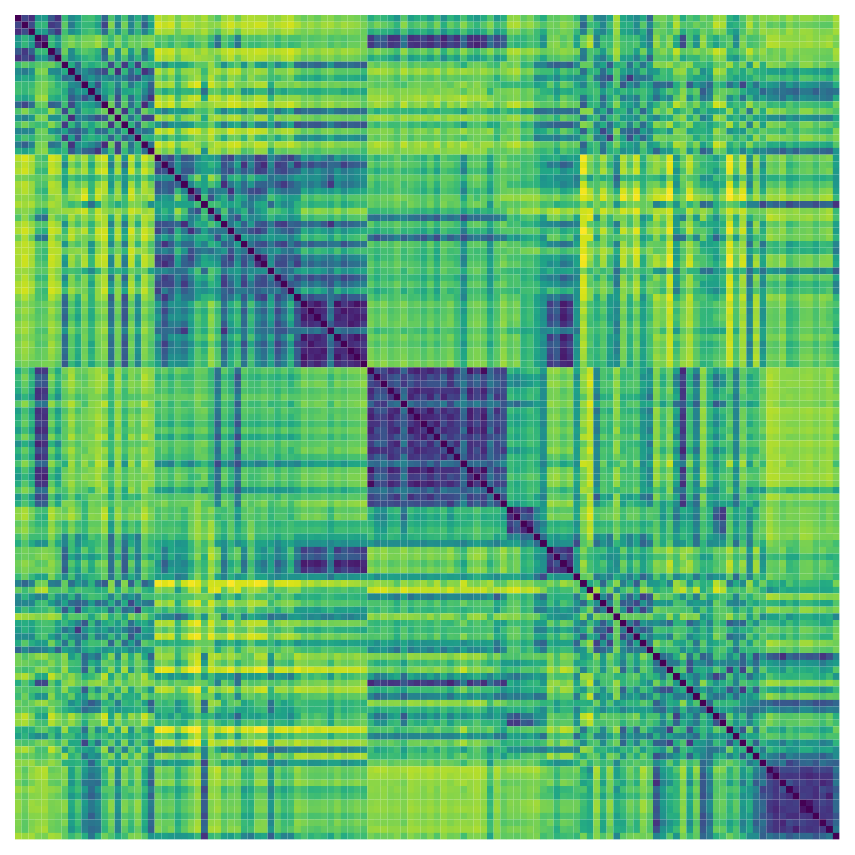}}
\subfigure[With Taxonomy]{\includegraphics[width=0.26\linewidth]{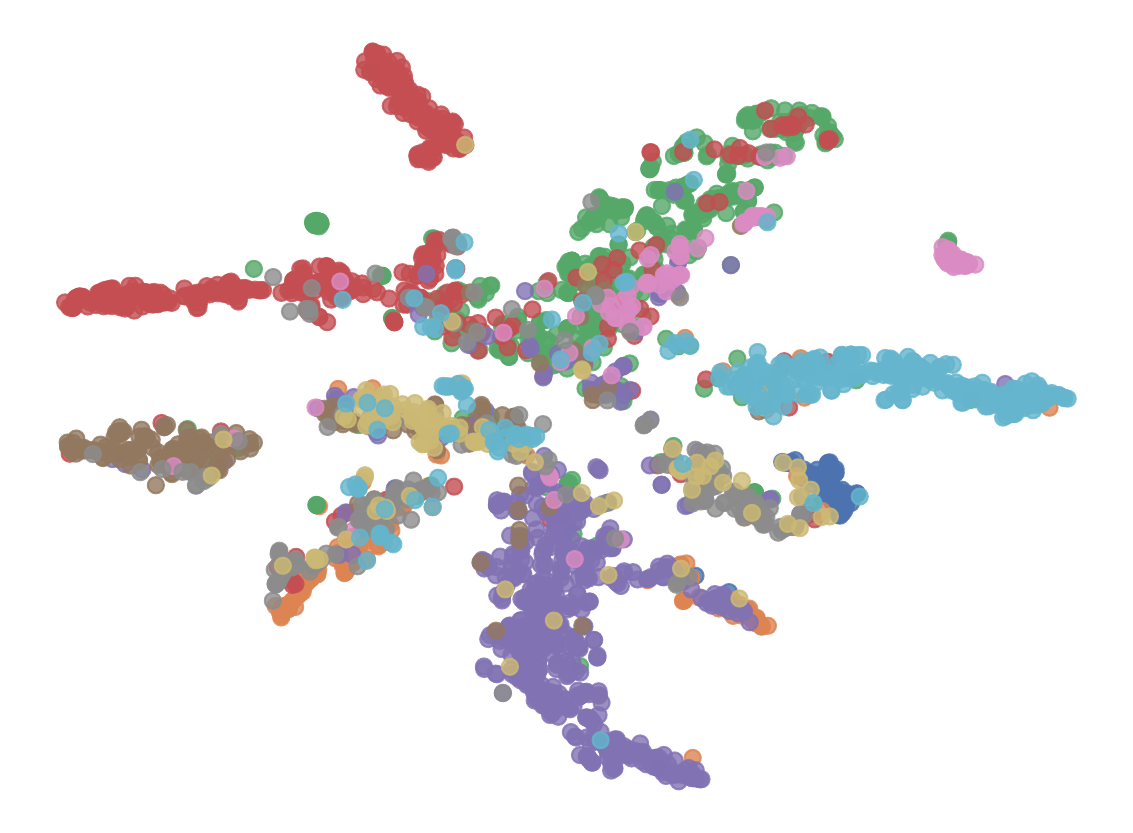}} 
\subfigure[With Taxonomy]{\includegraphics[width=0.19\linewidth]{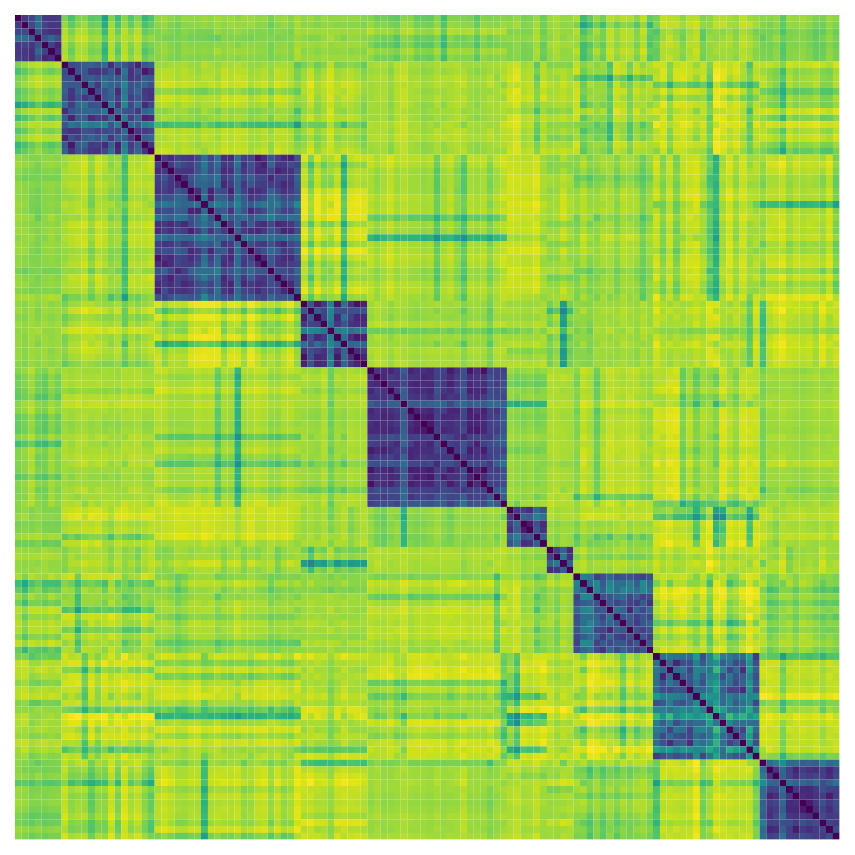}}
\caption{Visualizations of the learned node representations (colored by ground-truth labels) and the pairwise distance matrix between finest-level cluster centroids on WikiCS, with and without taxonomy regularization. The constructed taxonomy tree has 3 hierarchical levels with 1, 10, and 124 cluster nodes from top to bottom, respectively. Darker colors indicate smaller distances. With taxonomy regularization, clearer block structures emerge, where darker diagonal blocks correspond to coarser-grained clusters, reflecting improved semantic hierarchy in learned representation space.}
\label{fig:visual_wikics}
\end{figure*}

\begin{figure*}[t]
\centering
\subfigure[Without Taxonomy]{\includegraphics[width=0.26\linewidth]{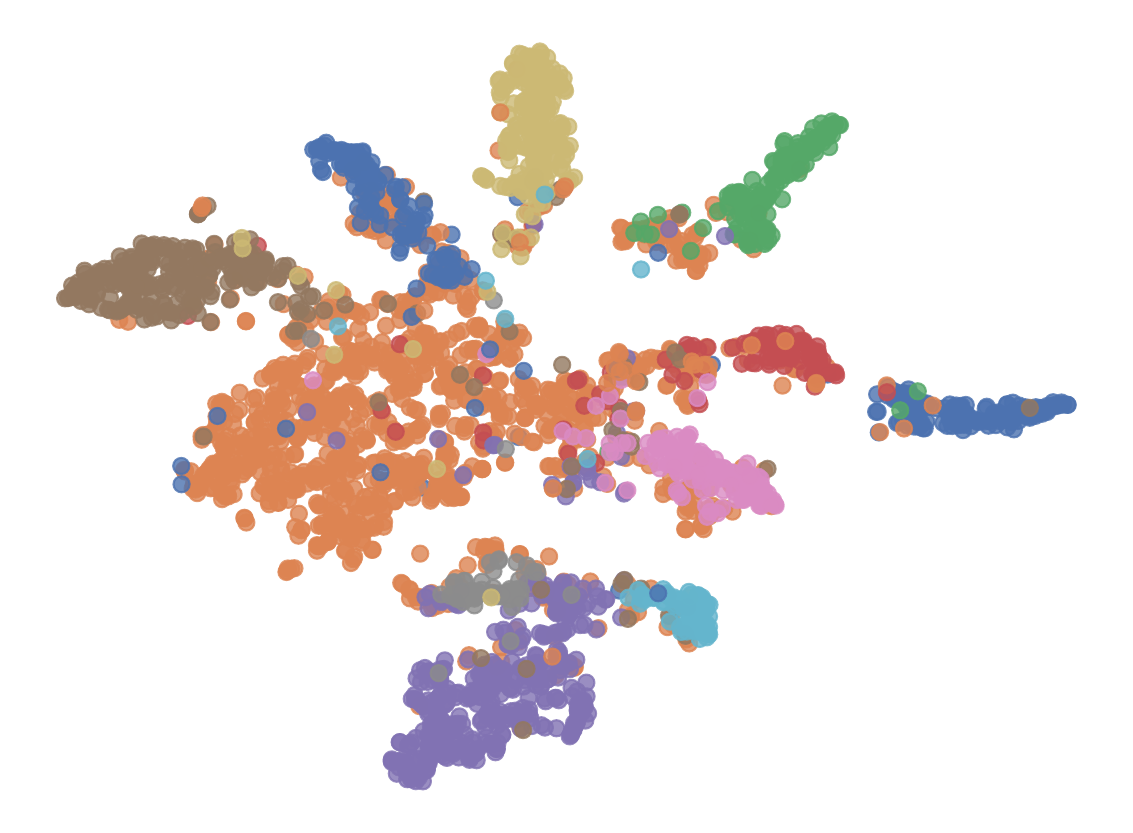}} 
\subfigure[Without Taxonomy]{\includegraphics[width=0.19\linewidth]{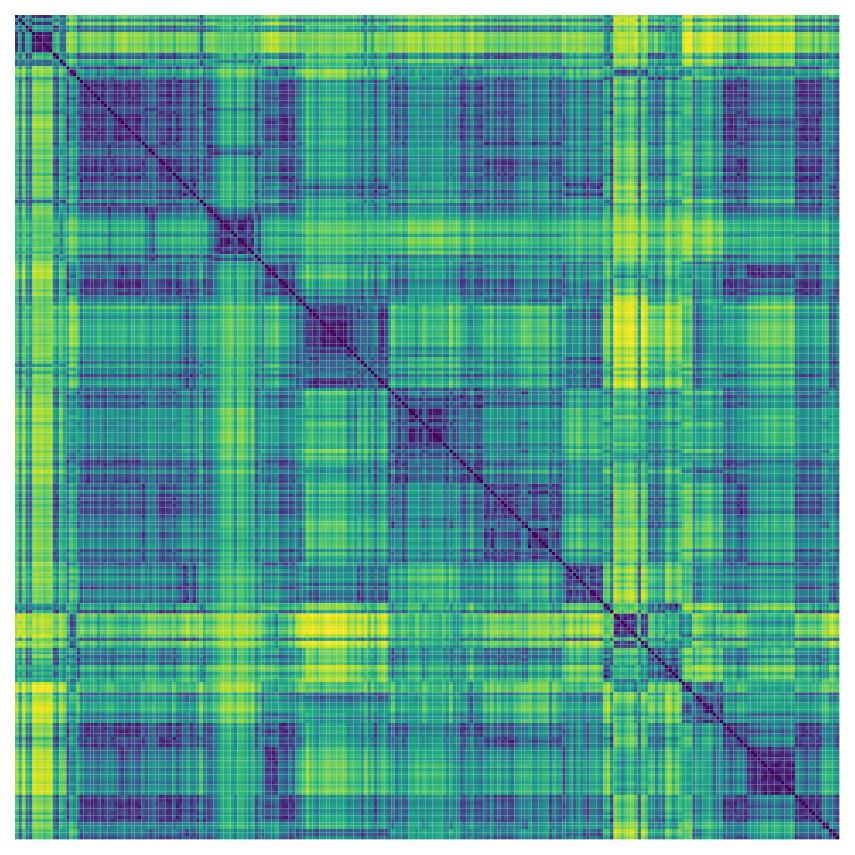}}
\subfigure[With Taxonomy]{\includegraphics[width=0.26\linewidth]{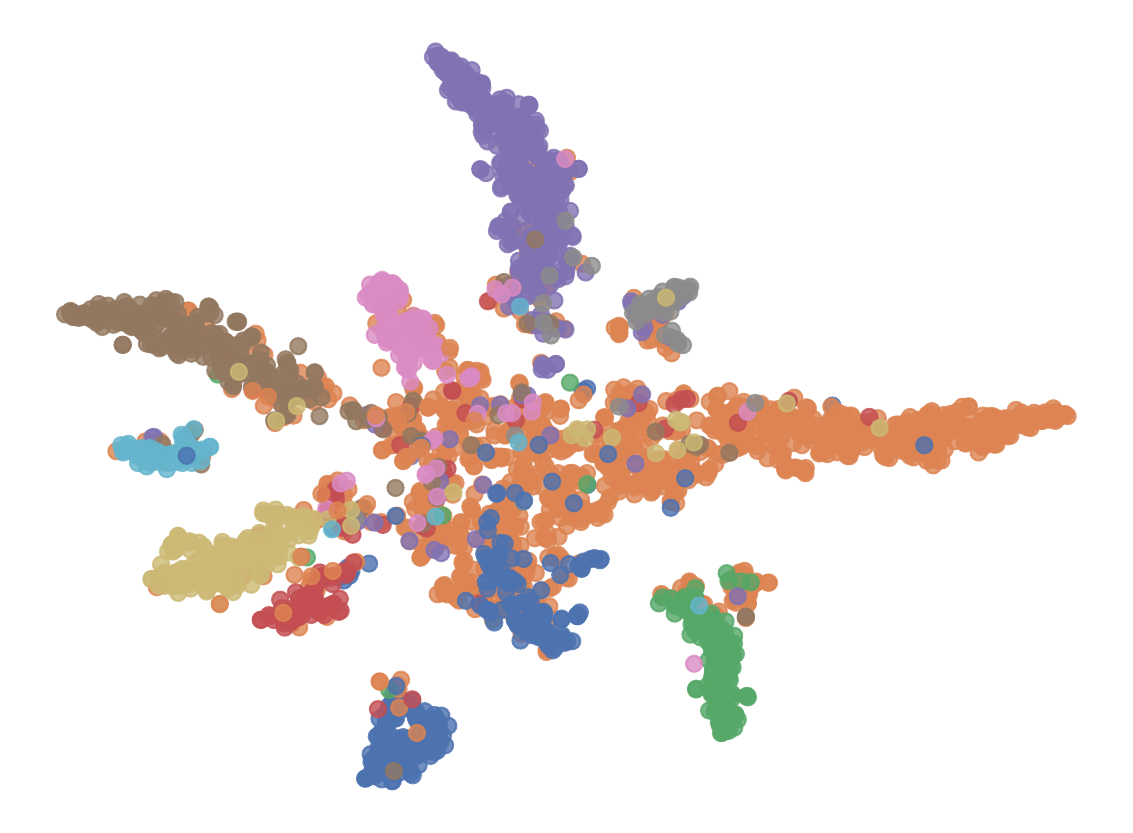}} 
\subfigure[With Taxonomy]{\includegraphics[width=0.19\linewidth]{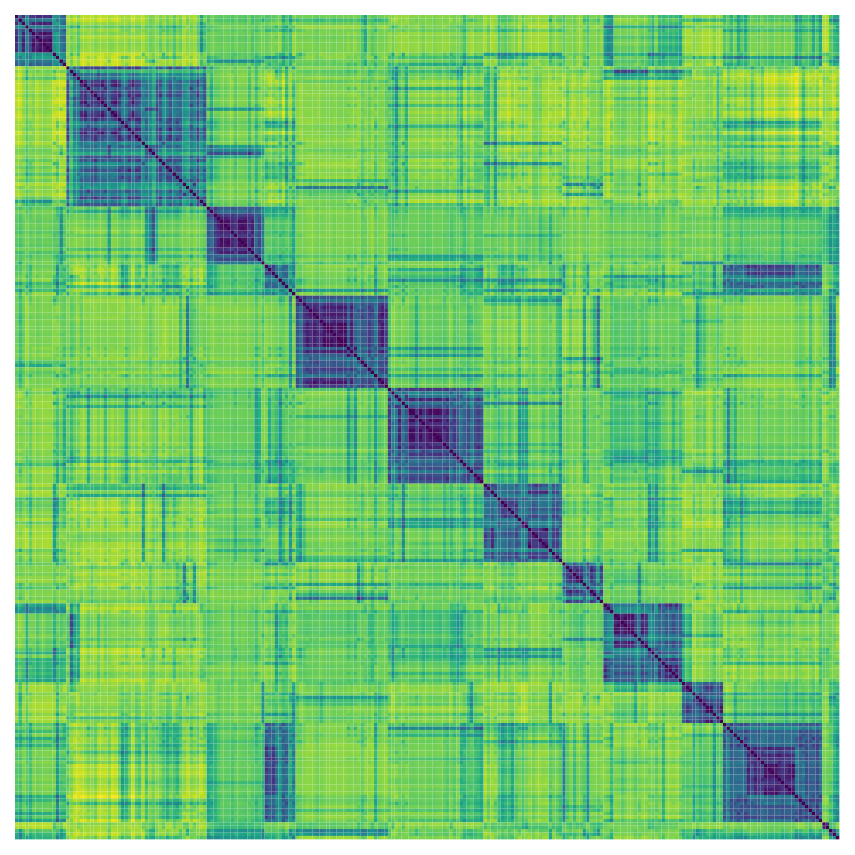}}
\caption{Visualizations of the learned node representations (colored by ground-truth labels) and the pairwise distance matrix between finest-level cluster centroids on Photo, with and without taxonomy regularization. The constructed taxonomy tree has 4 hierarchical levels with 1, 12, 64, and 241 cluster nodes from top to bottom, respectively. Darker colors indicate smaller distances. With taxonomy regularization, clearer block structures emerge, where darker diagonal blocks correspond to coarser-grained clusters, reflecting improved semantic hierarchy in learned representation space.}
\label{fig:visual_photo}
\end{figure*}


\begin{figure*}[h]
\centerline{\includegraphics[width=1.\linewidth]{images/taxonomy/citeseer_taxonomy.pdf}}\caption{RadialMap of the constructed taxonomy on Citeseer.}\label{fig:citeseer_taxonomy}
\end{figure*}

\begin{figure*}[h]
\centerline{\includegraphics[width=1.\linewidth]{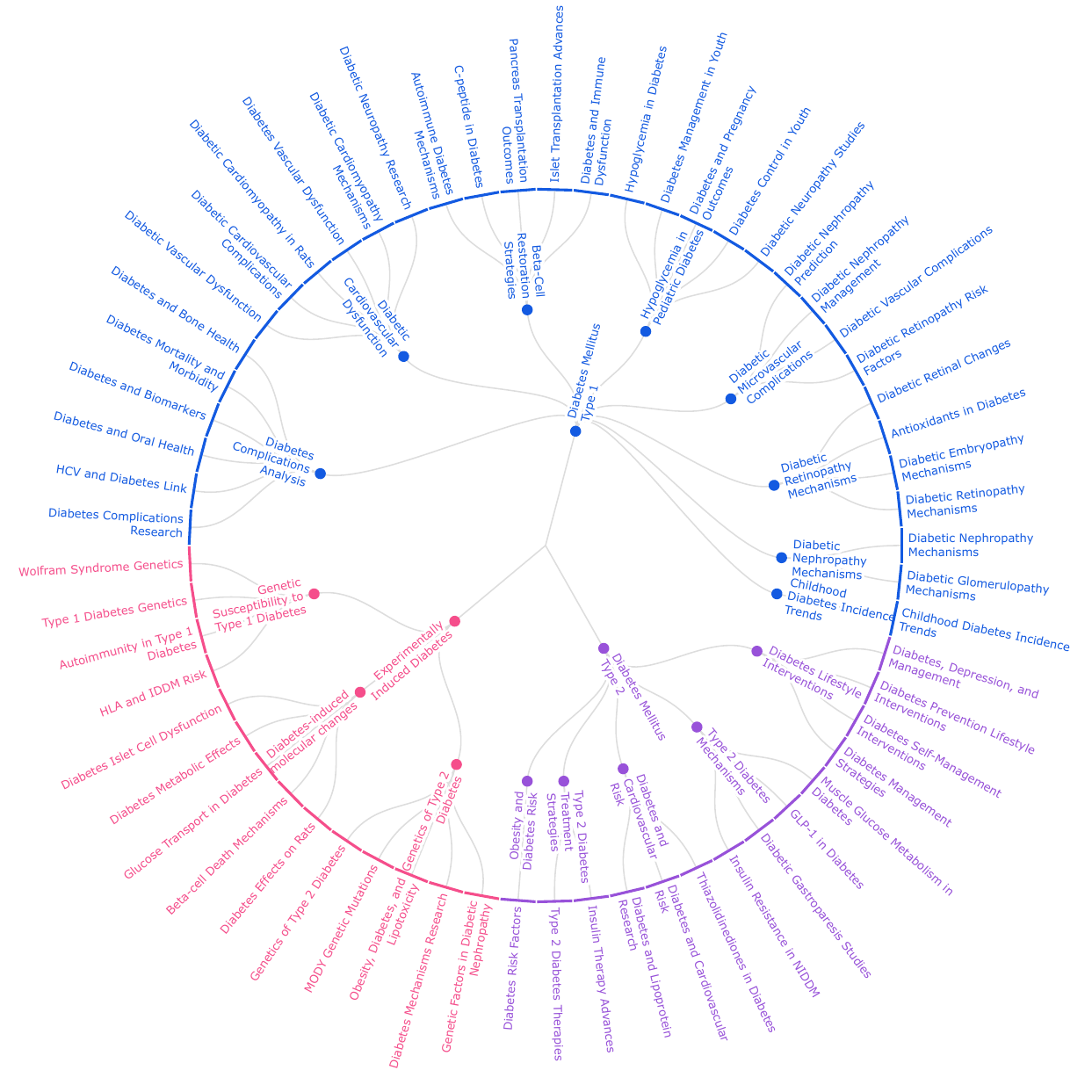}}\caption{RadialMap of the constructed taxonomy on Pubmed.}\label{fig:pubmed_taxonomy}
\end{figure*}

\begin{figure*}[h]
\centerline{\includegraphics[width=1.\linewidth]{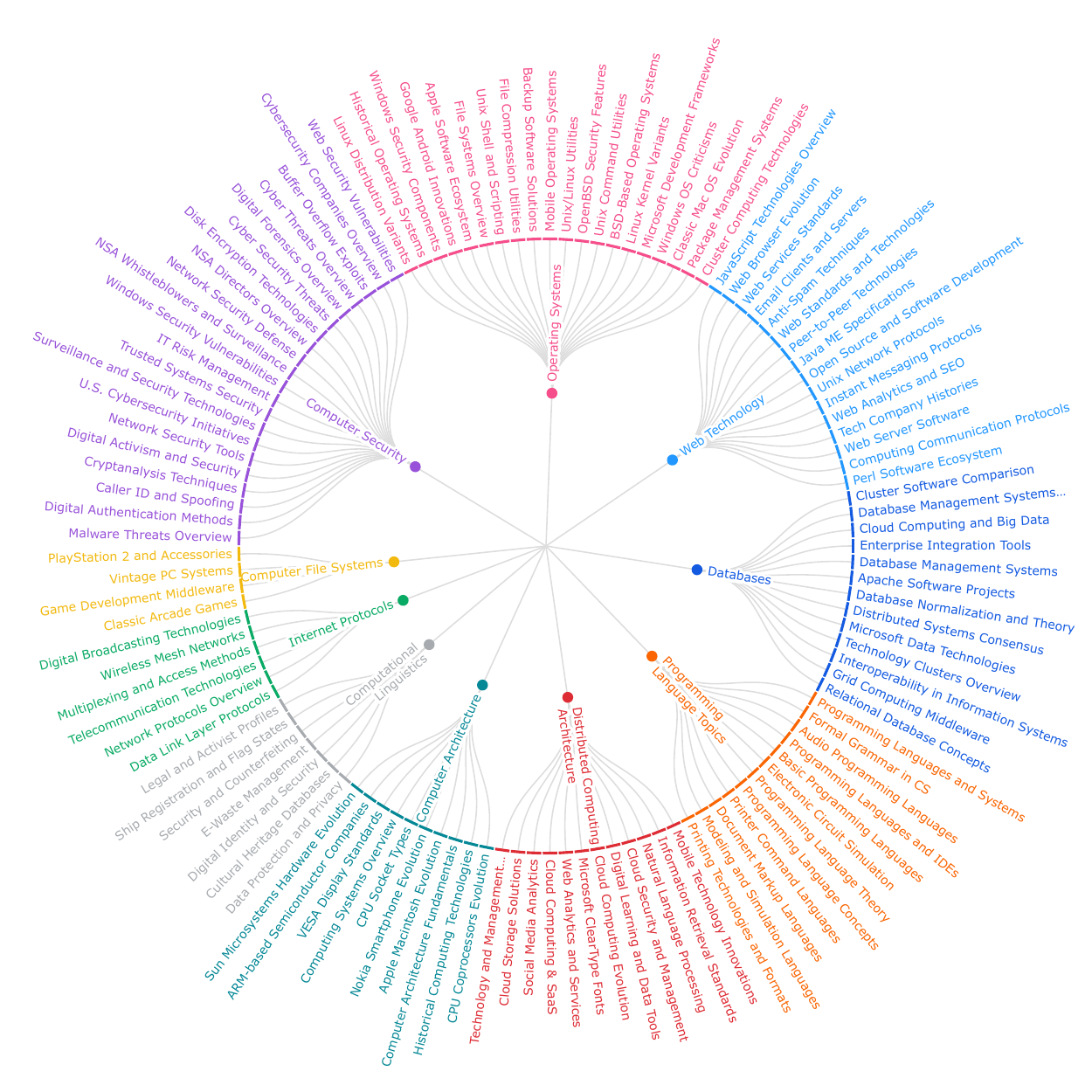}}\caption{RadialMap of the constructed taxonomy on WikiCS.}\label{fig:wikics_taxonomy}
\end{figure*}

\begin{figure*}[h]
\centerline{\includegraphics[width=1.\linewidth]{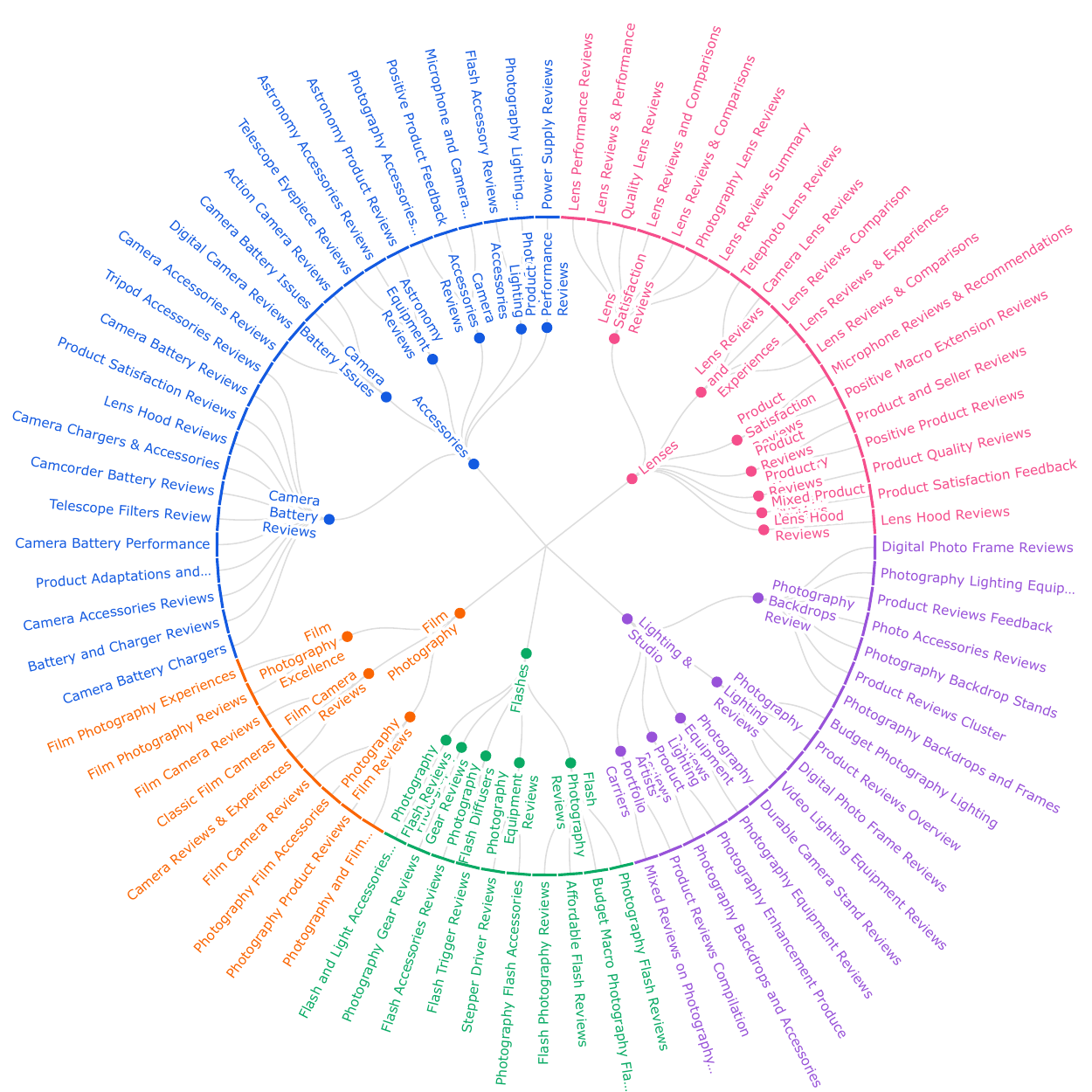}}\caption{RadialMap of the constructed taxonomy on Photo.}\label{fig:photo_taxonomy}
\end{figure*}
\end{document}